\newcommand{\wmc}{\ensuremath{\mathrm{WMC}}\xspace}
\newcommand{\wmi}{\ensuremath{\mathrm{WMI}}\xspace}
\newcommand{\lwmc}{\ensuremath{\mathrm{L{-}WMC}}\xspace}
\newcommand{\lwmi}{\ensuremath{\mathrm{L{-}WMI}}\xspace}
\newcommand{\ite}{\ensuremath{\mathrm{ite}}\xspace}
\newcommand{\weight}{\ensuremath{w}\xspace}
\newcommand{\ive}[1]{\left\llbracket#1\right\rrbracket}
\newcommand{\varset}[1]{\ensuremath{\mathbf{#1}}}
\newcommand{\xvars}{\varset{x}}
\newcommand{\bvars}{\varset{b}}
\newcommand{\lra}{\ensuremath{\mathcal{LRA}}\xspace}
\newcommand{\nra}{\ensuremath{\mathcal{NRA}}\xspace}
\newcommand{\ra}{\ensuremath{\mathcal{RA}}\xspace}
\newcommand{\bdomain}{\ensuremath{\mathbb{B}}\xspace}
\newcommand{\rdomain}{\ensuremath{\mathbb{R}}\xspace}
\newcommand{\zdomain}{\ensuremath{\mathbb{Z}}\xspace}
\newcommand{\borel}{\ensuremath{\mathcal{B}}\xspace}
\newcommand{\bmeas}{\ensuremath{\mu}\xspace}
\newcommand{\rmeas}{\ensuremath{\lambda}\xspace}
\newcommand{\bprob}{\ensuremath{\eta}\xspace}
\newcommand{\rprob}{\ensuremath{\tau}\xspace}
\newcommand{\zmeas}{\ensuremath{\xi}\xspace}
\newcommand{\partitive}{\ensuremath{\mathcal{P}}\xspace}
\newcommand{\false}{\ensuremath{\bot}\xspace}
\newcommand{\true}{\ensuremath{\top}\xspace}
\newcommand{\litb}{\mathcal{L}_{\bvars}\xspace}
\newcommand{\modls}[1]{\mathcal{M}(#1)}
\newcommand{\indicator}[1]{\mathbbm{1}_{#1}}
\newcommand{\tuple}[1]{\left(#1\right)}
\newcommand{\set}[1]{\left\{\,#1\,\right\}}
\theoremstyle{plain}
\newtheorem{theorem}{Theorem}
\newtheorem{proposition}[theorem]{Proposition}
\newtheorem{corollary}[theorem]{Corollary}
\newtheorem{lemma}[theorem]{Lemma}
\theoremstyle{definition}
\newtheorem{definition}[theorem]{Definition}
\newtheorem{example}[theorem]{Example}
\theoremstyle{remark}
\newenvironment{talign}
 {\align}
 {\endalign}
\def\ps@pprintTitle{%
 \let\@oddhead\@empty
 \let\@evenhead\@empty
 \def\@oddfoot{\centerline{\thepage}}%
 \let\@evenfoot\@oddfoot}
\begin{document}

\begin{frontmatter}

%% Title, authors and addresses

%% use the tnoteref command within \title for footnotes;
%% use the tnotetext command for theassociated footnote;
%% use the fnref command within \author or \address for footnotes;
%% use the fntext command for theassociated footnote;
%% use the corref command within \author for corresponding author footnotes;
%% use the cortext command for theassociated footnote;
%% use the ead command for the email address,
%% and the form \ead[url] for the home page:
%% \title{Title\tnoteref{label1}}
%% \tnotetext[label1]{}
%% \author{Name\corref{cor1}\fnref{label2}}
%% \ead{email address}
%% \ead[url]{home page}
%% \fntext[label2]{}
%% \cortext[cor1]{}
%% \affiliation{organization={},
%%             addressline={},
%%             city={},
%%             postcode={},
%%             state={},
%%             country={}}
%% \fntext[label3]{}
\title{Measure Theoretic Weighted Model Integration\tnoteref{titlefn}}
\tnotetext[titlefn]{Ivan Mio\v{s}i\'{c} conducted this research partially at the KU Leuven while being supported by the Erasmus+ program of the European Union. Pedro Zuidberg Dos Martires is supported by the Special Research Fund of the KU Leuven.}

%\author[1]{Ivan Mio\v{s}i\'{c}}
%\affiliation{organization=University of Zagreb,}
%\author[2]{Pedro Zuidberg Dos Martires}
%\affil[1]{organization=University of Zagreb,}
% \ead{email address}
% \ead[url]{home page}
%\cortext[cor1]{Corresponding author.}
% \affiliation{organization={},
%             addressline={},
%             city={},
%             postcode={},
%             state={},
%             country={}}
% \fntext[label3]{}

%% use optional labels to link authors explicitly to addresses:
 \author[1]{Ivan Mio\v{s}i\'{c}}
 \address[1]{University of Zagreb}
 \author[2]{Pedro Zuidberg Dos Martires}
 \address[2]{KU Leuven}
%%
% \affiliation[label2]{organization={},
%             addressline={},
%             city={},
%             postcode={},
%             state={},
%             country={}}

% \author{Ivan Mio\v{s}i\'{c}}
% \author[kuleuven]{Pedro Zuidberg Dos Martires}

% \affiliation{organization={KU Leuven, Department of Computer Science},%Department and Organization
%             addressline={Celestijnenlaan 200A}, 
%             city={Heverlee},
%             postcode={3001}, 
%             country={Belgium}}

% \author {
%     % Authors

%         Ivan Mio\v{s}i\'{c},\textsuperscript{\rm 1}
%         Pedro Zuidberg Dos Martires \textsuperscript{\rm 2}
%         Third Author Name \textsuperscript{\rm 1}
%         % \thanks{Ivan Mio\v{s}i\'{c} conducted this research partially at the KU Leuven while being supported by the Erasmus+ program of the European Union. Pedro Zuidberg Dos Martires is supported by the Research Foundation-Flanders and Special Research Fund of the KU Leuven.}
% }

% \affiliation[]{}

% \maketitle

\begin{abstract}
Weighted model counting (WMC) is a popular framework to perform probabilistic inference with discrete random variables. 
Recently, WMC has been extended to weighted model integration (WMI) in order to additionally handle continuous variables.
At their core, WMI problems consist of computing integrals and sums over weighted logical formulas.  
From a theoretical standpoint, WMI has been formulated by patching the sum over weighted formulas, which is already present in WMC, with Riemann integration.
A more principled approach to integration, which is rooted in measure theory, is Lebesgue integration.
Lebesgue integration allows one to treat discrete and continuous variables on equal footing in a principled fashion.
We propose a theoretically sound measure theoretic formulation of weighted model integration, which naturally reduces to weighted model counting in the absence of continuous variables.
Instead of regarding weighted model integration as an extension of weighted model counting, WMC emerges as a special case of WMI in our formulation.
\end{abstract}

\begin{keyword}
weighted model counting \sep weighted model integration \sep measure theory \sep probabilistic inference

%% keywords here, in the form: keyword \sep keyword

%% PACS codes here, in the form: \PACS code \sep code

%% MSC codes here, in the form: \MSC code \sep code
%% or \MSC[2008] code \sep code (2000 is the default)

\end{keyword}

\end{frontmatter}

%% \linenumbers

%% main text
\section{Introduction}
Weighted model counting (WMC)~\cite{chavira2008probabilistic}, in combination with knowledge compilation~\cite{darwiche2002knowledge}, has emerged as the go-to technique to perform inference in probabilistic graphical models~\cite{darwiche2009modeling} and probabilistic programming languages~\cite{fierens2015inference} with discrete random variables. 
A  major drawback of standard WMC, however, is its limitation to discrete (random) variables and hence to discrete probability distributions and weight functions only.
This puts considerable restrictions on the problems that can be modeled.
Weighted model integration (WMI)~\cite{belle2015probabilistic} is a recent extension of the WMC formalism that tackles this deficiency and allows additionally for continuous variables.

\begin{example}
Consider the example of a WMI problem in Figure~\ref{figure:intro_example_wmi}.
The problem has two continuous random variables ($x$ and $y$) and three Boolean random variables, which produce the different feasible regions (the red region and the two blue regions).
The regions themselves are given by constraints on the continuous variables. Moreover, for each feasible region a weight function is given. Outside of the regions the weight is zero.
WMI tackles the problem of computing the integral over the feasible regions.
\end{example}

\begin{figure}[h]
    \centering
    \includegraphics[width=0.65\linewidth]{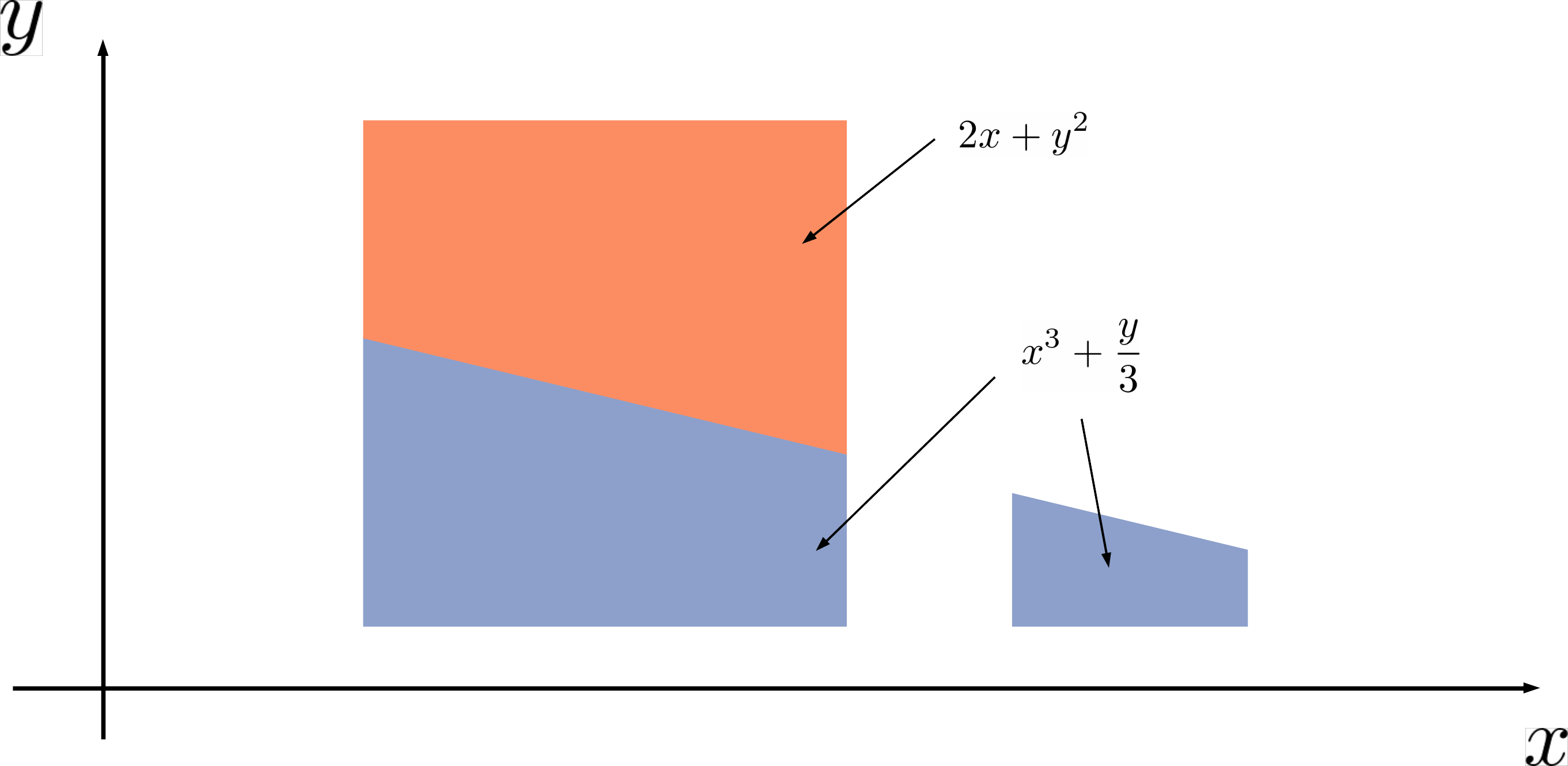} 
    \caption{Geometric representation of a WMI problem.}
    \label{figure:intro_example_wmi}
\end{figure}

Since the inception of WMI, a plethora of inference algorithms have emerged following the WMI paradigm.
Some of which perform exact inference~\cite{braz2016probabilistic,belle2016component,morettin2017efficient,merrell2017weighted,kolb2018efficient,morettin2019advanced,zuidberg2019exact,kolb2019structure,zeng2020scaling,derkinderen2020ordering}, or approximate inference~\cite{belle2015hashing,zuidberg2019exact}, or solve a subclass of WMI problems efficiently~\cite{zengefficient} --- demonstrating an avid interest from the research community.
The pywmi toolbox for WMI solvers~\cite{kolb2019pywmi} reassembles some of these efforts in a single Python library.

All of the above cited works formalize WMI as a combination of Riemann integration  and summation and none leverages the power of Lebesgue integration  in order to formally define weighted model integration.
This is rather astounding as Lebesgue integration is a natural fit to formalize integration/summation in such a discrete-continuous setting.
We speculate this is due to the technical overhead involved with Lebesgue integration compared to Riemann integration.
In this paper we show how the problem of weighted model integration can be defined in terms of Lebesgue integration and place WMI in a measure theoretic setting.
%Lebesgue integration is a more sophisticated and theoretically sounder concept compared to Riemann integration, with decidedly nicer properties in regards to limiting processes.
We hope this work will help bridge theoretical distinctions between different approaches to WMI and create a unified and cohesive view on probabilistic inference problems in Boolean, discrete, and continuous domains.

Traditionally, probability theory has been one of the main domains of application of Lebesgue integration and measure theory.
Probability is in fact naturally represented as special type of measure function.
Given that WMI is inseparably connected to probabilistic inference, it is convenient to formalize WMI in a measure theoretic setting.
Effectively, this extends the currently used class of Riemann integrable WMI problems~\citep{belle2015probabilistic} to the class of Lebesgue integrable problems. 

The remainder of the paper is organized as follows.
Section~\ref{sec:measure_theory} discusses the necessary background on measure theory.
In Section~\ref{sec:wmc_wmi}, the problems of WMC and WMI are introduced 
(using the current formulation based on summation and Riemann integration).
% together with formal definitions of required logical terms.
Section~\ref{sec:mt_wmc_wmi}, the central part of the paper, first presents the formalization of WMC in a measure theoretic setting (Subsection~\ref{sec:mt_wmc}),
followed by an analogous treatment of WMI (Subsection~\ref{sec:mt_wmi}).
Succeedingly, Section~\ref{sec:measures} deals with the important category of probabilistic weight functions, which can directly be represented as a measure, again first in the Boolean setting (Subsection~\ref{sec:wmc_measure}), and then in the hybrid one (Subsection~\ref{sec:wmi_measure}).
We end the paper with concluding remarks in Section~\ref{sec:conclusion}.

\section{WMC and WMI}\label{sec:wmc_wmi}

%%%%%%%%%%%%%%%%%%%%%%%%%%%%%%%%
\subsection{Weighted Model Counting}\label{sec:wmc}

\begin{definition}
Let $\bvars = \set{B_1, B_2, \ldots, B_M}$ be a set of $M$ Boolean variables (or logical propositions), which can be combined in the usual way using logical connectives $\lnot$, $\land$ and $\lor$, producing \emph{formulas of propositional logic}.
We call a \emph{literal} either a Boolean variable or its negation, and denote with $\litb = \set{B_1, \lnot B_1, B_2, \lnot B_2, \dotsc, B_M, \lnot B_M}$ the set of all literals over the set $\bvars$.
\end{definition}

Without loss of generality, we regard the sets of variables as ordered sets. Also, any logical formula we mention is assumed to use all variables from the underlying set of variables.
The set of Boolean (truth) values will be denoted by $\bdomain = \set{\false, \true}$.
In order to assign a truth value to a formula, we introduce the concept of an \emph{interpretation}.

\begin{definition}[Interpretation of propositional formula]\label{def:inter_bool}
A \emph{total interpretation} of the Boolean variables in $\bvars$ is any mapping from set $\bvars$ to the set $\bdomain$.
We require this mapping to commute with logical connectives in the usual way, so that it can be extended to any propositional formula built from variables in $\bvars$.
A propositional formula $\phi$ is said to be \emph{true} under the interpretation $I$ if $I(\phi) = \true$ and \emph{false} otherwise.
\end{definition}

Closely connected to the notion of interpretation is that of a \emph{model}. 
We define it conveniently for later use in our formulation of measure theoretic WMC and WMI.

\begin{definition}[Model of propositional formula]\label{def:model_bool}
Let $I$ be an interpretation and $\phi$ a propositional formula over $\bvars$, such that $I(\phi) = \true$.
We say that the $M$-tuple
\[ \mathfrak{M}(I) = \tuple{I(B_1), I(B_2), \ldots, I(B_M)} \in \bdomain^M \]
is a \emph{model} of $\phi$ associated with interpretation $I$. We denote the \emph{set of all models} of the propositional formula $\phi$ by $\modls{\phi} = \set{\mathfrak{M}(I) \mid I(\phi) = \true}$.
\end{definition}

In the WMC literature, the model of a propositional formula associated with an interpretation $I$ is traditionally defined as a subset of $\litb$ containing literals that are true under this interpretation~\cite{chavira2008probabilistic,kimmig2017algebraic}.
Any such subset $\mathfrak{M}^{\mathcal{L}}(I) = \set{\ell_1, \ell_2, \ldots, \ell_M}$, where $\ell_i = \ite \left( I(B_i), B_i, \lnot B_i \right)$ for all $i = 1$, $2$, $\dotsc$, $M$, uniquely defines the $M$-tuple $\tuple{I(B_1), I(B_2), \ldots, I(B_M)} \in \bdomain^M$ used in the previous Definition, and vice versa.
The function symbol `$\ite$' denotes the {\em if-then-else} function: if the first argument is $\true$ (true) the second argument is returned, else the third argument is returned.

% running example 1

Using this notation, the well known Boolean satisfiability problem (SAT) is expressed as the problem of determining whether $\modls{\phi} = \emptyset$. Its counting counterpart (\#SAT) is expressed as determining the exact number of elements in $\modls{\phi}$.
% Even more general problem is that of weighted model counting (\wmc).

\begin{definition}[WMC]\label{def:wmc}
Let $\bvars$ be a set of $M$ Boolean variables, and $\phi$ be a propositional formula over $\bvars$. Furthermore, let $\weight^{\mathcal{L}} \colon \litb \to \mathbb{R}_{\ge 0}$ be a weight function of Boolean literals. Then the \emph{weighted model count (WMC)} of the formula $\phi$ is given by:
\begin{align}\label{eq:wmc}
    \wmc\left(\phi, \weight^{\mathcal{L}} \mid \bvars \right) = \sum_{\mathfrak{M} \in \modls{\phi}} \prod_{\;\ell \in\, \mathfrak{M}^{\mathcal{L}}} \weight^{\mathcal{L}}(\ell) \text.
\end{align}
\end{definition}

% running example 2

For simplicity of exposition, we assume the weight function to be non-negative, which is also justified by weight functions used in practice.
The importance of WMC for probabilistic inference cannot be overstated, and is thoroughly investigated in~\cite{chavira2008probabilistic,fierens2015inference}. 
Further interesting generalizations of WMC to semirings other than the  $\mathbb{R}$-semiring are discussed in~\cite{kimmig2017algebraic}.

%%%%%%%%%%%%%%%%%%%%%%%%%%%%%%%%
\subsection{Weighted Model Integration}\label{sec:wmi}

Many applications require probabilistic inference in continuous domains.
In order to capture these applications, the task of weighted model counting has been extended to weighted model integration~\cite{belle2015probabilistic}.
The first step is a definition of a logical theory which combines Boolean and continuous variables. To this end we follow the definition in~\cite{zuidberg2019exact} (a more formal definition can be found in~\cite{biere2009handbook}).

\begin{definition}[SMT]
Let $\bvars = \set{B_1, B_2, \ldots, B_M}$ be a set of $M$ Boolean variables, and $\xvars = \set{X_1, X_2, \ldots, X_N}$ be a set of $N$ real variables. 
An \emph{atomic formula} is either a Boolean variable (\textbf{logical proposition}) from set $\bvars$, or a valid arithmetical statement (\textbf{real arithmetical proposition}) consisting of variables from $\xvars$, real numbers and symbols $+$, $\cdot$, \textasciicircum, and $\le$, having standard interpretation as real addition, multiplication, exponentiation, and less-than inequality, respectively. Atomic formulas are combined using logical connectives $\lnot$, $\land$ and $\lor$, producing so-called \emph{SMT formulas}.
\end{definition}

Any real arithmetical proposition $\theta$ can be written in the equivalent general form \linebreak $\hat{\theta}(X_1, X_2, \ldots, X_N) \le 0$. Here $\hat{\theta}$ denotes a function from $\rdomain^N$ to $\mathbb{R}$ encoded by proposition $\theta$. Based on the restrictions posed on this function in a specific SMT theory, we distinguish, among others, SMT(\lra) theory ($\hat{\theta}$ is a linear function), SMT(\nra) theory ($\hat{\theta}$ is a polynomial function) and SMT(\ra) theory ($\hat{\theta}$ is unrestricted).
% Look into literature whether there is some more theories, smth with probability

\begin{definition}[Interpretation of SMT formulas]\label{def:inter_smt_total}
Let an SMT theory be built over the Boolean variables in $\bvars$ and continuous variables in $\xvars$. A \emph{total interpretation} of the variables in $\bvars$ and $\xvars$ is a pair $I = (I_{\bvars}, I_{\xvars})$, where $I_{\bvars}$ is a mapping from $\bvars$ to $\bdomain$ and $I_{\xvars}$ is a mapping from $\xvars$ to $\rdomain$.

The logical value of an atomic formula $\theta$ under the interpretation $I$, is defined as $I(\theta) := I_{\bvars}(\theta)$ if $\theta$ is a logical proposition. In case of $\theta$ being a real arithmetical proposition, we define $I(\theta) := \true$ if the inequality $\hat{\theta}(I_{\xvars}(X_1), \ldots, I_{\xvars}(X_N)) \le 0$ holds, and $I(\theta) := \false$ otherwise. Requiring an interpretation to commute with logical connectives in the usual way extends the definition of an interpretation to any SMT formula. 
\end{definition}

The mappings $I_{\bvars}$ and $I_{\xvars}$ from the previous definition are called \emph{partial interpretations} of Boolean and continuous variables, respectively.
Analogously to the purely Boolean case, we define models of SMT formulas as $(M+N)$-tuples.

\begin{definition}[Model of an SMT formula]\label{def:model_smt}
Let $I = (I_{\bvars}, I_{\xvars})$ be an interpretation and $\phi$ an SMT formula over variables in $\bvars$ and $\xvars$ such that $I(\phi) = \true$. We say that
\begin{align*}
\mathfrak{M}(I)
&= \tuple{\mathfrak{M}_{\bvars}(I), \mathfrak{M}_{\xvars}(I)} \\
&= \tuple{\tuple{I_{\bvars}(B_i)}_{i = 1}^M , \tuple{I_{\xvars}(X_j)}_{j = 1}^N} \in \bdomain^M \times \rdomain^N
\end{align*}
is a \emph{model} of formula $\phi$ associated to interpretation $I$. The \emph{set of all models} is denoted again by $\modls{\phi}$.
\end{definition}

The projection of $\modls{\phi}$ to $\bdomain^M$ is denoted by
\[ \mathcal{M}_{\bvars}(\phi) = \set{b \in \bdomain^M \mid \text{there is } x \in\! \rdomain^N \text{ such that }\! \tuple{b, x} \in\! \modls{\phi}} \text, \]
and analogously by $\mathcal{M}_{\xvars}(\phi)$ its projection to $\mathbb{R}^N$.
These sets contain \emph{partial models} of a formula and are used below in the definition of weighted model integration.
Furthermore, for any $b \in \mathbb{B}^M$, set denoted by
\[ \mathcal{M}_{\xvars}(\phi)/b = \set{x \in \rdomain^N \mid \tuple{b, x} \in \modls{\phi}} \] 
consists of elements $x \in \mathcal{M}_{\xvars}(\phi)$ which extend partial model $b \in \mathcal{M}_{\bvars}(\phi)$ to a total model $\tuple{b, x} \in \modls{\phi}$.
% We are now ready to extend the definition of weighted model counting to the hybrid domain.

% runnning example 3

\begin{definition}[WMI]\label{def:wmi}
Let $\bvars$ be a set of $M$ Boolean variables, $\xvars$ a set of $N$ real variables, and $\phi$ an SMT formula over $\bvars$ and $\xvars$.
Let $w \colon \bdomain^M \times \rdomain^N \to \mathbb{R}_{\ge 0}$ be a weight function of Boolean and real variables.
For any $b \in \bdomain$, a function $w_b \colon \rdomain^N \to \mathbb{R}$ is defined with $w_b(x) = w(b, x)$, for all $x \in \rdomain^N$.
Assume that for all $b \in \mathcal{M}_{\bvars}(\phi)$, the functions $w_{b}$ are Riemann integrable on the sets $\mathcal{M}_{\xvars}(\phi)/b$, respectively.
We define the \emph{weighted model integral (WMI)} of a formula $\phi$ with regards to the weight function $\weight$ by:
\begin{align}\label{eq:wmi}
     \wmi\left(\phi, \weight \mid \bvars, \xvars\right) =  \sum_{b \in \mathcal{M}_{\bvars}(\phi)} \, \int_{x \in  \mathcal{M}_{\xvars}(\phi)/b}\!\!\!\! w_b(x)\, d x_1\, d x_2 \dotsb d x_N \text.
\end{align}
\end{definition}

% runnning example 4

%%%%%%%%%%%%%%%%%%%%%%%%%%%%%%%%%%%%%%%%%%%%%%%%%%%%%%%%%%%%%%%
\section{Measure theoretic WMC and WMI}\label{sec:mt_wmc_wmi}

As a central contribution, we introduce variants of both WMC and WMI based on measure theory, introducing {\bf measures of weighted propositional logic and SMT formulas}.
We proceed to prove that they generalize classical WMC and WMI based on summation and Riemann integration.
This measure theoretic formulation of WMC and WMI yields an elegant proof of congruence of these two concepts in the case of a purely Boolean domain.

A formulation that treats Boolean and real variables on equal footing and leads to the congruence of WMC and WMI has also been presented in~\cite{zengefficient}, where the authors reduce weighted model integration to model integration~\cite{luu2014model}. However, the reduction is performed by transforming the summation over Boolean variables to a Riemann integration over real variables without relying on the more powerful and expressive Lebesgue integration.

Prior to formulating weighted model counting and integration as a measure theoretic problem, we give a brief introduction to measure theory. We provide a formal excursion on the measure theoretic concepts essential to this paper in Appendix~\ref{sec:measure_theory}.

\subsection{An Appetizer of Measure Theory}\label{sec:mt_appetizer}

Let us assume we have two real numbers $a$ and $b$. 
We would like to know how far these two numbers are apart. 
In other words, we would like to know the length $l$ of the segment $S$ delimited by $a$ and $b$. 
In Euclidean geometry, the length $l$ is simply given by $l=\lvert b-a \rvert$. Now, instead of viewing $l$ as the length of the segment $S$, we can also regard $l$ as the size of the set of points that make up $S$.

Measure theory generalizes the concepts of length, area and volume by answering the question {\em `how big is a specific set?'} This is done by systematically assigning a positive real number to a given set. A set is called measurable if such a number can actually be assigned. In Euclidean geometry, a measure of particular importance is the Lebesgue measure, which assigns the conventional Euclidean length, volume, and hypervolume to measurable subsets of the $N$-dimensional Euclidean space $\mathbb{R}^N$.  

Furthermore, measure theory does also provide the axiomatic formulation of probability theory as developed by~\citet{kolmogorov1950foundations}: 
probability theory considers measures that assign to the whole set (domain of definition) size 1, and considers measurable subsets to be events whose probability is given by the measure.  
This probability measure then corresponds to the expectation of random variables.

In the context of model counting (\#SAT), we want to determine/measure the size of the set of satisfying assignments to a propositional logic formula.

For the uninitiated reader we provide in Table~\ref{tab:glossary} a glossary of technical terms used in measure theory and give the relevant pointers to their introduction in Appendix~\ref{sec:measure_theory}.

\begin{table}[t]
\begin{center}
\caption{Glossary of technical terms used in measure theory.}
\label{tab:glossary}
\begin{tabular}{l|l}
$\sigma$-algebra (Def.~\ref{def:sigma_algebra})      & Lebesgue measure (Def.~\ref{ex:lebesgue})   \\
measurable space (Def.~\ref{def:sigma_algebra})    &      measurable function (Def~\ref{def:measurable_function})  \\
Borel $\sigma$-algebra (Def.~\ref{def:borel})     &        simple function (Def.~\ref{def:simple_function})         \\
countably additive (Def.~\ref{def:borel})  &      $\mu$-almost everywhere  (Def.~\ref{def:almost_everywhere})         \\
measure (Def.~\ref{def:measure}) &        product measure (Theo.~\ref{thm:prod_meas})         \\
counting measure (Def.~\ref{ex:counting}) &               probability space  (Def.~\ref{def:probability_space})

\end{tabular}
\end{center}
\end{table}

{~}

\subsection{Measure Theoretic WMC}\label{sec:mt_wmc}

In order to embed WMC into measure theory (using Lebesgue integration), a slight adjustment to Definition~\ref{def:wmc} is in order.
It is more convenient to define a weight function over the set $\bdomain^M$, similarly to Definition~\ref{def:wmi}, instead of over the set of literals $\mathcal{L}_{\bvars}$. 
To this end, we transform the given weight function $w^{\mathcal{L}} \colon \mathcal{L}_{\bvars} \to \mathbb{R}_{\ge 0}$ over literals to an equivalent weight function $w \colon \bdomain^M \to \mathbb{R}_{\ge 0}$ over $\bdomain^M$ as follows: for any $b = \tuple{b_1, b_2, \ldots, b_M} \in \bdomain^M$, let
\begin{align}\label{eq:weight}
    \weight(b) = \prod_{i=1}^M \, \ite \left(b_i, \weight^{\mathcal{L}}(B_i), \weight^{\mathcal{L}}(\lnot B_i)\right) \text.
\end{align}
Equation~\eqref{eq:wmc} now becomes:
\begin{align}%\label{eq:wmc_half_leb}
    \wmc \left(\phi, \weight^{\mathcal{L}} \mid \bvars\right) = \sum_{\mathfrak{M} \in \modls{\phi}}\! \weight(\mathfrak{M}) \nonumber \text.
\end{align}
Notice that this expression already looks `Lebesguean'. Indeed, we only need to specify the components of an appropriate measure space.

\begin{proposition}\label{prop:space_bool}
$\tuple{\mathbb{B}^M, \partitive(\mathbb{B}^M), \bmeas}$ is a measure space, where $\partitive(\mathbb{B}^M)$ is a partitive set of $\bdomain^M$ and $\bmeas \colon \partitive(\bdomain^M) \to \interval{0}{+\infty}$ is a counting measure.
\end{proposition}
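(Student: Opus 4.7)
The plan is to verify the two defining ingredients of a measure space directly against the definitions recalled in Appendix~\ref{sec:measure_theory}: first that $\partitive(\bdomain^M)$ is a $\sigma$-algebra over $\bdomain^M$, and second that the counting measure $\bmeas$ is genuinely a measure on it. Both checks reduce to inspection because $\bdomain^M$ is a \emph{finite} set of size $2^M$, which collapses all the countability issues that usually make such verifications subtle.

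For the $\sigma$-algebra axioms I would argue that for any set $X$ whatsoever, the power set $\partitive(X)$ satisfies the three axioms of a $\sigma$-algebra: it contains $X$ itself, it is closed under complementation (the complement of a subset is a subset), and it is closed under countable unions (a countable union of subsets is a subset). Specializing to $X = \bdomain^M$ gives the first half of the claim. I would also briefly remark that since $\bdomain^M$ is finite, $\partitive(\bdomain^M)$ is in fact a finite $\sigma$-algebra of cardinality $2^{2^M}$, so all "countable" operations are automatically finite.

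For the measure axioms, recall that the counting measure is defined by $\bmeas(A) = \lvert A \rvert$ for $A \in \partitive(\bdomain^M)$, taking values in $\interval{0}{+\infty}$ (here in fact bounded by $2^M$). I would verify $\bmeas(\emptyset) = 0$ trivially, and then check countable additivity: for a countable family of pairwise disjoint subsets $A_1, A_2, \dotsc \subseteq \bdomain^M$, at most finitely many are non-empty, so
\[
\bmeas\!\left(\bigcup_{i \ge 1} A_i\right) = \Bigl\lvert \bigcup_{i \ge 1} A_i \Bigr\rvert = \sum_{i \ge 1} \lvert A_i \rvert = \sum_{i \ge 1} \bmeas(A_i),
\]
where the middle equality uses that cardinality is additive on disjoint finite sets. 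This finishes the verification.

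The main "obstacle," such as it is, is purely expository: the statement is essentially an instance of two folklore facts (the power set is always a $\sigma$-algebra; the counting measure is always a measure). The proof therefore amounts to stating the axioms and observing that they are satisfied, with the one non-trivial point being that countable additivity is here automatic because $\bdomain^M$ is finite. I would keep the write-up short and emphasize this reduction, since the value of the proposition lies not in its difficulty but in pinning down the measurable space that the subsequent Lebesgue-integration formulation of \wmc will build on.
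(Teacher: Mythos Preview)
Your proposal is correct and follows the same approach as the paper, which simply states that any set together with the counting measure on its power set defines a measure space; you merely unpack this folklore fact by checking the $\sigma$-algebra and measure axioms explicitly. The paper's proof is a single sentence, so your write-up is more detailed than needed, but nothing is wrong or missing.
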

\begin{proof}[\bf{Proof}]
Any set together with a counting measure on its partitive set defines a measure space.
\end{proof}

We are now in the position to express the weighted model count in measure theoretic terms.
% Since the domain of the weight function has been changed, there is no reason to explicitly mention set $\bvars$ of Boolean variables.

\begin{definition}[Lebesgue WMC]\label{def:wmc_leb}
Let $\bvars$ be a set of $M$ Boolean variables, and $\phi$ a propositional formula over $\bvars$. 
Furthermore, let $\weight \colon \bdomain^M \to \mathbb{R}_{\ge 0}$ be a weight function. 
The \emph{Lebesgue weighted model count (\lwmc)} of the formula $\phi$ with respect to the weight $\weight$ is defined by:
\begin{align}
\lwmc \left(\phi, \weight\right) = \int_{\modls{\phi}} \weight\, d\bmeas \, \text. \nonumber
\end{align}
\end{definition}

The integral in the previous definition is well defined, because $\weight$ is obviously bounded (as the set $\bdomain^M$ is finite) and it is trivially measurable (as the whole partitive set of $\bdomain^M$ is a $\sigma$-algebra). 
% Therefore, $\weight$ is integrable with respect to counting measure $\bmeas$ on $\partitive(\bdomain^M)$. 

\begin{theorem} \label{thm:wmc_leb}
Let $\bvars$ be a set of $M$ Boolean variables, and $\phi$ be a propositional formula over $\bvars$.
Furthermore, let $\weight^{\mathcal{L}} \colon \litb \to \mathbb{R}_{\ge 0}$ be a \emph{weight function} of Boolean literals and $w \colon \bdomain^M \to \mathbb{R}$ be constructed from $\weight^{\mathcal{L}}$ as in Equation~\eqref{eq:weight}.
Then:
\begin{equation*}\label{eq:wmc_leb}
    \lwmc\left(\phi, \weight\right) = \wmc\left(\phi, \weight^{\mathcal{L}} \mid \bvars\right) \text.
\end{equation*}
\end{theorem}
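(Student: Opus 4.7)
The plan is to exploit two facts in sequence: first, that the Lebesgue integral with respect to the counting measure $\bmeas$ on a finite set collapses to a finite sum; and second, that the weight function $\weight$ constructed via Equation~\eqref{eq:weight} is, by design, exactly the per-model product appearing in the classical WMC formula.

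I would begin by noting that $\modls{\phi} \subseteq \bdomain^M$ is a finite set, so the restriction of $\weight$ to $\modls{\phi}$ is a bounded, trivially measurable simple function. Writing it canonically as $\weight\cdot \indicator{\modls{\phi}} = \sum_{\mathfrak{M} \in \modls{\phi}} \weight(\mathfrak{M})\, \indicator{\{\mathfrak{M}\}}$ and invoking the definition of the Lebesgue integral of a simple function (from the measure theory appendix), combined with the fact that the counting measure gives $\bmeas(\{\mathfrak{M}\}) = 1$ for every singleton, yields
\begin{equation*}
\int_{\modls{\phi}} \weight\, d\bmeas \;=\; \sum_{\mathfrak{M} \in \modls{\phi}} \weight(\mathfrak{M}) \cdot \bmeas(\{\mathfrak{M}\}) \;=\; \sum_{\mathfrak{M} \in \modls{\phi}} \weight(\mathfrak{M}) \text.
\end{equation*}

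Next, I would unfold $\weight(\mathfrak{M})$ using Equation~\eqref{eq:weight}: for each model $\mathfrak{M} = \tuple{b_1, b_2, \ldots, b_M}$, one has $\weight(\mathfrak{M}) = \prod_{i=1}^M \ite\left(b_i, \weight^{\mathcal{L}}(B_i), \weight^{\mathcal{L}}(\lnot B_i)\right)$. Recalling that the literal-set representation $\mathfrak{M}^{\mathcal{L}}$ contains $B_i$ precisely when $b_i = \true$ and $\lnot B_i$ otherwise, the $i$-th factor of this product is exactly $\weight^{\mathcal{L}}(\ell_i)$ for the unique literal $\ell_i \in \mathfrak{M}^{\mathcal{L}}$ over variable $B_i$. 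Hence $\weight(\mathfrak{M}) = \prod_{\ell \in \mathfrak{M}^{\mathcal{L}}} \weight^{\mathcal{L}}(\ell)$, and substituting into the previous display reproduces Equation~\eqref{eq:wmc}, establishing the claim.

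The only genuinely non-trivial step is the first one, namely the reduction of $\int\weight\, d\bmeas$ to a finite sum; the remainder is a bookkeeping rewrite. Since that step is a textbook consequence of the definition of the Lebesgue integral of simple functions, I expect the main stylistic obstacle to be deciding whether to cite the corresponding result from the measure theory appendix or to spell out the one-line direct computation above. Given that this theorem is the conceptual bridge motivating the measure theoretic formulation, I would favor including the short explicit computation so the reader sees clearly how summation emerges as a special case of Lebesgue integration.
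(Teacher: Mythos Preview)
Your proposal is correct and follows essentially the same approach as the paper: both recognize $w\cdot\indicator{\modls{\phi}}$ as a simple function, evaluate its Lebesgue integral against the counting measure by the definition of the integral of a simple function, and identify the resulting sum with the classical $\wmc$ expression. The only cosmetic difference is that the paper expands the simple function over all of $\bdomain^M$ and then collapses via $\bmeas(\modls{\phi}\cap\{b\})=\ive{b\in\modls{\phi}}$, whereas you sum directly over $\modls{\phi}$; your explicit unfolding of $\weight(\mathfrak{M})$ via Equation~\eqref{eq:weight} is handled in the paper by the display immediately preceding Definition~\ref{def:wmc_leb}.
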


\begin{proof}[\bf{Proof}]
Since $\mathbb{B}^M$ is a finite set, $w$ is a simple function. $\ive{\cdot}$ will denote the Iverson bracket, which evaluates to $1$ if its argument is satisfied, and $0$ otherwise~\cite{iverson1962programming,knuth1992notes}.
\begin{align}
     \phantom{={}} \lwmc\left(\phi, \weight\right)
    = \int_{\modls{\phi}}\! w \, d\bmeas \nonumber 
    &= \int_{\bdomain^M} \left( w \cdot \indicator{\modls{\phi}} \right) d\bmeas \nonumber \\
    &= \int_{\bdomain^M} \left( \left(\, \sum_{b\in \bdomain^M} \weight(b) \cdot \indicator{\set{b}} \right) \cdot \indicator{\modls{\phi}} \right) d\bmeas  \nonumber\\
    &= \int_{\bdomain^M} \left(\, \sum_{b\in \bdomain^M} \weight(b) \cdot \indicator{\modls{\phi} \cap \set{b}} \right) d\bmeas  \nonumber \\
    &=\sum_{b\in \bdomain^M} \weight(b) \cdot \bmeas(\modls{\phi} \cap \set{b}) \nonumber \\
    &= \sum_{b\in \bdomain^M} \weight(b) \cdot \ive{b \in \modls{\phi}} \nonumber \\
    &= \sum_{b\in \modls{\phi}} \weight(b) = \wmc\left(\phi, \weight^{\mathcal{L}} \mid \bvars\right) \text. \nonumber \qedhere
\end{align}
\end{proof}

This proves that the newly defined Lebesgue weighted model count, based on measure theory, coincides with the classical weighted model count from Definition~\ref{def:wmc}. This result is the first step towards a measure theoretic formulation of WMI.

%%%%%%%%%%%%%%%%%%%%%%%%%%%%%%%%%%%%%%%%%%%%%%%%%%%%%%%%%%%%%%%
\subsection{Measure Theoretic WMI}\label{sec:mt_wmi}

We now turn to introducing an appropriate measure space for the hybrid domain consisting of Boolean and real variables, and proving the central result of this paper.
In the following, $\borel(\rdomain^N)$ denotes the Borel $\sigma$-algebra on $\rdomain^N$ from Definition~\ref{def:borel} and $\rmeas^N$ the Lebesgue measure on $\rdomain^N$ from Definition~\ref{ex:lebesgue}. The exponent in $\rmeas^N$ shall be omitted for simplicity, when the dimension of the real space is clear from context.

\begin{proposition}\label{prop:space_real}
$\tuple{\bdomain^M \times \rdomain^N, \partitive(\bdomain^M) \times \borel(\rdomain^N), \bmeas \times \rmeas}$ is a measure space, which is a product of the measure space $\tuple{\bdomain^M, \partitive(\bdomain^M), \bmeas}$ from Proposition~\ref{prop:space_bool} with measure space $\tuple{\rdomain^N, \borel(\rdomain^N), \rmeas}$.
\end{proposition}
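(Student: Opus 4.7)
The plan is to derive this proposition as a direct instance of the product measure construction referenced in Table~\ref{tab:glossary} (Theorem~\ref{thm:prod_meas}). The hypotheses needed are that each factor is itself a measure space, and (for a well-defined and unique product measure) that each factor is $\sigma$-finite. So the work reduces to checking three things: (i) the Boolean factor is a measure space, (ii) the real factor is a measure space, and (iii) both factors satisfy $\sigma$-finiteness.

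For (i), I would simply invoke Proposition~\ref{prop:space_bool}, which already does the work: $\tuple{\bdomain^M, \partitive(\bdomain^M), \bmeas}$ is a measure space by virtue of $\partitive(\bdomain^M)$ being a $\sigma$-algebra and $\bmeas$ being the counting measure on it. For (ii), I would point out that $\borel(\rdomain^N)$ is a $\sigma$-algebra (Definition~\ref{def:borel}) and that the Lebesgue measure $\rmeas$ is a measure on it (Definition~\ref{ex:lebesgue}), so $\tuple{\rdomain^N, \borel(\rdomain^N), \rmeas}$ is a measure space by definition.

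For (iii), I would note that the Boolean space is in fact finite: $\bmeas(\bdomain^M) = 2^M < +\infty$, so it is trivially $\sigma$-finite. For the real factor, a standard exhaustion argument applies: write $\rdomain^N = \bigcup_{k \in \mathbb{N}} \interval[open left]{-k}{k}^N$, where each cube $\interval[open left]{-k}{k}^N$ is Borel with Lebesgue measure $(2k)^N < +\infty$, showing $\sigma$-finiteness of $\rmeas$.

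With both factors being $\sigma$-finite measure spaces, Theorem~\ref{thm:prod_meas} yields the existence of the product $\sigma$-algebra $\partitive(\bdomain^M) \times \borel(\rdomain^N)$ and a unique product measure $\bmeas \times \rmeas$ on it, giving the desired measure space. I do not expect any genuine obstacle here; the proposition is essentially a bookkeeping step that packages the standard product construction for later use in Section~\ref{sec:mt_wmi}. The only subtlety worth flagging explicitly is the $\sigma$-finiteness of $\rmeas$, since without it the product measure would not be uniquely determined, which in turn would compromise the definition of \lwmi that follows.
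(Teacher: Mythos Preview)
Your proposal is correct and follows exactly the paper's approach: the paper's proof is a one-line reference to Theorem~\ref{thm:prod_meas}, and you simply unpack the hypotheses (both factors are $\sigma$-finite measure spaces) that make that theorem applicable. Your added verification of $\sigma$-finiteness is a welcome bit of explicitness that the paper leaves implicit.
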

\begin{proof}[\bf{Proof}]
See Theorem~\ref{thm:prod_meas}.
\end{proof}

Next we introduce the technical concept of \emph{measurability} of an SMT theory. 
Say that an SMT formula $\phi$ is \emph{measurable} if its set of models $\modls{\phi}$ is a mesurable set in the measure space from proposition~\ref{prop:space_real}.
Now an SMT theory is said to be \emph{measurable} if all its formulas are measurable.

\begin{lemma}
SMT(\lra), SMT(\nra) and SMT(\ra) are measurable theories.
\end{lemma}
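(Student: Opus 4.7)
The plan is to show that $\modls{\phi}$ lies in the product $\sigma$-algebra $\partitive(\bdomain^M) \times \borel(\rdomain^N)$ by reducing to a Borel-measurability claim in $\rdomain^N$ alone, and then proving that claim by structural induction on the formula.

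First, I would write the decomposition
\begin{equation*}
\modls{\phi} \;=\; \bigcup_{b \,\in\, \bdomain^M} \bigl(\set{b} \times (\mathcal{M}_{\xvars}(\phi)/b)\bigr),
\end{equation*}
which is a finite union indexed by $\bdomain^M$. Since every singleton $\set{b}$ lies in $\partitive(\bdomain^M)$, and since rectangles $A \times B$ with $A \in \partitive(\bdomain^M)$ and $B \in \borel(\rdomain^N)$ generate the product $\sigma$-algebra from Proposition~\ref{prop:space_real}, the membership of $\modls{\phi}$ in the product $\sigma$-algebra reduces to showing that each slice $\mathcal{M}_{\xvars}(\phi)/b$ is Borel in $\rdomain^N$. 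Fixing $b$, I would substitute the truth values $b_1, \ldots, b_M$ for the Boolean atoms of $\phi$, obtaining an SMT formula $\phi_b$ whose atoms are only real arithmetical propositions (together with the constants $\true$ and $\false$), and whose solution set in $\rdomain^N$ is exactly $\mathcal{M}_{\xvars}(\phi)/b$.

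Next, I would proceed by structural induction on $\phi_b$. The base case handles atomic real arithmetical propositions $\theta \equiv \hat\theta(X_1,\ldots,X_N) \le 0$, whose solution set is the preimage $\hat\theta^{-1}\bigl((-\infty, 0]\bigr)$. In $\mathcal{LRA}$ and $\mathcal{NRA}$, $\hat\theta$ is linear respectively polynomial and therefore continuous on $\rdomain^N$, so preimages of closed sets are closed, hence Borel. The inductive step is then immediate: $\lnot$, $\land$, $\lor$ correspond to complementation, intersection and union of the associated solution sets, and $\borel(\rdomain^N)$ is closed under these operations (and under the finite unions/intersections that appear here). Combined with the decomposition above, this closes the argument for $\mathcal{LRA}$ and $\mathcal{NRA}$.

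The main obstacle, and the only place where care is required, is $\mathcal{RA}$: here $\hat\theta$ may involve the exponentiation symbol $\hat{\ }$, which is not continuous on all of $\rdomain^2$ (think of $x^y$ with $x<0$ and non-integer $y$). My plan is to argue that addition, multiplication, and exponentiation are each Borel-measurable on their natural domains, that compositions of Borel-measurable functions are Borel-measurable, and that the set of points where $\hat\theta$ is well-defined is itself Borel (it is obtained by finitely many conditions of the form \emph{``the base of this subexpression is non-negative, or the exponent is integral''}, each of which cuts out a Borel set). One then fixes any Borel extension of $\hat\theta$ outside its natural domain (for concreteness, extending by the constant $0$, with the convention that points of undefinedness falsify $\theta$); the preimage $\hat\theta^{-1}((-\infty,0])$ is then Borel, and the inductive step goes through as before. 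This verifies measurability for all three theories.
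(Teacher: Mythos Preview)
Your argument is correct and follows essentially the same line as the paper's: atomic arithmetical propositions yield Borel sets as preimages $\hat\theta^{-1}\bigl(\interval[open left]{-\infty}{0}\bigr)$ of a continuous (or Borel-measurable) function, and measurability then propagates through $\lnot$, $\land$, $\lor$ by closure of the $\sigma$-algebra. Two minor differences are worth noting. First, you slice over $\bdomain^M$ up front and reduce to a pure Borel claim in $\rdomain^N$, whereas the paper argues directly in the product $\sigma$-algebra, observing that Boolean atoms have trivially measurable model sets (since the Boolean side carries the full power set) and then closing under finite Boolean combinations of products; both routes are equivalent and equally short. Second, for $\mathcal{RA}$ the paper simply asserts that functions built from $+$, $\cdot$, $\hat{\ }$ are continuous, while you correctly flag that real exponentiation $x^y$ is not globally continuous (or even defined) and give a more careful argument via Borel-measurable extension on the natural domain. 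Your treatment here is strictly more rigorous than the paper's one-line claim.
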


\begin{proof}[\bf{Proof}]
%rewrite
Linear functions, polynomials and generally all real functions obtained by means of addition, multiplication and exponentiation of real variables and constants, are continuous. Hence, they are Borel measurable (see Example~\ref{ex:meas_functions}). 
Now note that the set of models for any real arithmetical proposition $\theta$ is $\hat{\theta}^{-1}(\interval{-\infty}{0})$. 
Therefore, these sets are Lebesgue measurable.
The set of models of logical proposition is always measurable, since $\sigma$-algebra on $\bdomain^M$ is the whole partitive set $\partitive(\bdomain^M)$.

The set of models of any formula from the above theories is now obtained as a (possibly complement of) finite union and intersection of products of models for the Boolean and real parts of the formula. By definition they remain elements of the product $\sigma$-algebra, i.e.\ they are measurable. \qedhere
\end{proof}

We have set the stage for the definition of the measure theoretic weighted model integral.

\begin{definition}[Lebesgue WMI]\label{def:wmi_leb}
Let $\bvars$ be a set of $M$ Boolean variables, $\xvars$ a set of $N$ real variables and $\phi$ an SMT formula over $\bvars$ and $\xvars$. 
Furthermore, let $w \colon \bdomain^M \times \rdomain^N \to \mathbb{R}_{\ge 0}$ be a weight function of Boolean and real variables. 
Assume that the formula $\phi$ is measurable and the function $\weight$ is integrable with regards to the product measure $\bmeas \times \rmeas$ on $\partitive(\bdomain^M) \times \borel(\rdomain^N)$ from Proposition~\ref{prop:space_real}. 
We define the \emph{Lebesgue weighted model integral (\lwmi)} of the formula $\phi$ with respect to the weight $\weight$ as:
\begin{align}
    \lwmi \left( \phi, \weight \right) = \int_{\modls{\phi}}\! w\, d(\bmeas \times \rmeas) \text. \nonumber
\end{align}
\end{definition}

For weight functions that are not Riemann integrable but Lebesgue, Definition~\ref{def:wmi_leb} provides an alternative to Defintion~\ref{def:wmi} for the weighted model intergral.
On the other hand, in case of Riemann integrable weight functions, \wmi and \lwmi are equal.

%\begin{theorem}\label{thm:wmi_leb}
%Let $\bvars$ be a set of $M$ Boolean variables, $\xvars$ a set of $N$ real variables and $\phi$ an SMT formula over $\bvars$ and $\xvars$. %Furthermore, let $w \colon \bdomain^M \times \rdomain^N \to \mathbb{R}$ be a \emph{weight function} of Boolean and real variables.
%For any $b \in \bdomain$ denote with $w_b \colon \rdomain^N \to \rdomain$ a function $w_b(x) = w(b, x)$.
%If the formula $\phi$ is measurable and for all $b \in \mathcal{M}_{\bvars}(\phi)$ the functions $w_{b}$ are Riemann integrable on the sets %$\mathcal{M}_{\xvars}(\phi)/b$ respectively, then:
%and the function $\weight$ is Lebesgue integrable with regards to the product measure $\bmeas \times \rmeas$ on $\partitive(\bdomain^M) \times %\borel(\rdomain^N)$ from Proposition~\ref{prop:space_real}, then:
%\begin{equation}\label{eq:wmi_leb}
%    \wmi\left(\phi, \weight \mid \bvars, \xvars \right) = \int\limits_{\modls{\phi}} w\, d(\bmeas \times \rmeas) \,\text.
%\end{equation}
%\end{theorem}

\begin{theorem}\label{thm:wmi_leb}
Under the assumptions of Definition~\ref{def:wmi}, the following equality holds:
\begin{equation*}\label{eq:wmi_leb}
    \lwmi \left( \phi, \weight \right) = \wmi \left(\phi, \weight \mid \bvars, \xvars \right) \,\text.
\end{equation*}
\end{theorem}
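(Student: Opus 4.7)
The plan is to decompose the Lebesgue integral over the product measure space into an iterated integral via Tonelli's theorem, then identify the outer integral (against the counting measure) with the sum over Boolean assignments, and the inner integral (against the Lebesgue measure) with the Riemann integral.

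First I would rewrite the \lwmi{} as an integral over the whole product space by multiplying with the indicator of the set of models:
\begin{align*}
\lwmi(\phi, \weight) = \int_{\modls{\phi}} \weight \, d(\bmeas \times \rmeas) = \int_{\bdomain^M \times \rdomain^N} \weight \cdot \indicator{\modls{\phi}} \, d(\bmeas \times \rmeas).
\end{align*}
Since $\phi$ is measurable and $\weight \ge 0$ (hence $\weight \cdot \indicator{\modls{\phi}}$ is a non-negative measurable function on the product space of Proposition~\ref{prop:space_real}), Tonelli's theorem applies, letting us rewrite the above as an iterated integral, first over $\rdomain^N$ with respect to $\rmeas$ and then over $\bdomain^M$ with respect to $\bmeas$.

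Next, for every fixed $b \in \bdomain^M$, the slice $\{x \in \rdomain^N \mid (b,x) \in \modls{\phi}\}$ is exactly $\mathcal{M}_{\xvars}(\phi)/b$ (which is empty when $b \notin \mathcal{M}_{\bvars}(\phi)$), and the section of $\weight$ at $b$ is $\weight_b$. Because the outer integral is taken against the counting measure on the finite set $\bdomain^M$, it collapses to a finite sum, and terms corresponding to $b \notin \mathcal{M}_{\bvars}(\phi)$ vanish since the inner integral is over the empty set. This yields
\begin{align*}
\lwmi(\phi, \weight) = \sum_{b \in \mathcal{M}_{\bvars}(\phi)} \int_{\mathcal{M}_{\xvars}(\phi)/b} \weight_b(x) \, d\rmeas(x).
\end{align*}

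Finally, I would invoke the classical fact that for a bounded, Riemann integrable function on a bounded measurable subset of $\rdomain^N$, the Lebesgue integral coincides with the Riemann integral. Since the assumptions of Definition~\ref{def:wmi} guarantee Riemann integrability of each $\weight_b$ on $\mathcal{M}_{\xvars}(\phi)/b$, replacing $d\rmeas$ by $dx_1 \, dx_2 \cdots dx_N$ gives exactly $\wmi(\phi, \weight \mid \bvars, \xvars)$. The main subtlety, and the step I would need to be most careful about, is the justification for the last identification: the equivalence of Lebesgue and Riemann integrals requires Riemann integrability on the specific region $\mathcal{M}_{\xvars}(\phi)/b$, which in general is not a box. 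This is typically handled by extending $\weight_b$ by zero outside its support and invoking Lebesgue's criterion on an enclosing box, but it hinges on the implicit regularity of the integration regions that is baked into the standard WMI formulation.
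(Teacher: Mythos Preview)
Your proposal is correct and follows essentially the same route as the paper: rewrite the integral with an indicator, apply Tonelli's theorem to pass to an iterated integral, collapse the outer counting-measure integral to a finite sum over $\mathcal{M}_{\bvars}(\phi)$, and then identify each inner Lebesgue integral with the corresponding Riemann integral. The only step the paper makes explicit that you leave implicit is the preliminary verification (from the Riemann-integrability hypotheses of Definition~\ref{def:wmi}) that $\modls{\phi}$ is measurable and $\weight$ is $(\bmeas\times\rmeas)$-integrable, so that the \lwmi{} is well defined and Tonelli applies; conversely, your closing remark on the Riemann--Lebesgue identification is more careful than the paper's, which simply asserts the final equality.
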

\begin{proof}[\bf{Proof}]
For each $b \in \bdomain^M$, functions $\weight_b$ are by assumption Riemann integrable over sets $\mathcal{M}_{\xvars}(\phi)/b$, respectively. 
This implies that the sets $\mathcal{M}_{\xvars}(\phi)/b$ are Borel measurable and that the functions $\weight_b$ are Lebesgue integrable over these sets, respectively. 
Furthermore, the set $\bdomain^M$ is finite, and the following identities clearly hold:
\begin{align*}
    & \mathcal{M}(\phi) = \bigcup\nolimits_{\beta \in \bdomain^M} \set{\beta} \times \mathcal{M}_{\xvars}(\phi)/\beta \text, \\
    & \weight(b, x) = \sum\nolimits_{\beta \in \bdomain^M} \ive{\beta = b} \cdot w_b(x) \text.
\end{align*}
We conclude that the formula $\phi$ is measurable and that the function $w$ is integrable with regards to the product measure $\bmeas \times \rmeas$. 
Finally, we obtain:

\begingroup
\allowdisplaybreaks
\begin{align}
    {\phantom{{}={}}} \lwmi \left( \phi, \weight \right) 
    =  \int_{\modls{\phi}}\! w\, d(\bmeas \times \rmeas) \nonumber 
    &= \int_{\bdomain^M \times \rdomain^N}\! \left( w \cdot \indicator{\modls{\phi}} \right) d(\bmeas \times \rmeas) \nonumber \\
    &= \int_{\bdomain^M} \left(\ \int_{\rdomain^N}\! \left( w \cdot \indicator{\modls{\phi}} \right)_{b} d\rmeas \right) d\bmeas  \tag{\text{Theorem~\ref{thm:Tonelli}}} \nonumber  \\
    &= \int_{\bdomain^M} \left(\ \int_{\rdomain^N}  w_b \cdot \ive{b \in \mathcal{M}_{\bvars}(\phi)} \cdot \indicator{\mathcal{M}_{\xvars}(\phi)/b}\, d\rmeas \right) d\bmeas \nonumber \\
    &= \int_{\bdomain^M} \left(\ \int_{\rdomain^N} w_b \cdot \indicator{\mathcal{M}_{\xvars}(\phi)/b}\, d\rmeas \right) \cdot \indicator{\mathcal{M}_{\bvars}(\phi)}\, d\bmeas \nonumber \\
    &= \sum_{b \in \mathcal{M}_{\bvars}(\phi)}  \int_{\, \mathcal{M}_{\xvars}(\phi)/b}\! w_b\, d\rmeas \nonumber \\
    &=\wmi \left( \phi, \weight \mid \bvars, \xvars \right) \text. \nonumber \tag*{\qedhere}
\end{align}
\endgroup
\end{proof}

% We also use equality \[ (\indicator{\modls{\phi}})_{b} = \indicator{{\mathcal{M}_{\xvars}}(\phi)/b} \cdot \ive{b \in \mathcal{M}_{\bvars}(\phi)} \text.\]
% = \int\limits_{\bdomain^M \times \rdomain^N} \left( w \cdot \indicator{\modls{\phi}} \right) d(\bmeas \times \rmeas)
% = \int\limits_{\mathcal{M}_{\bvars}(\phi)} \left(\ \int\limits_{\mathcal{M}_{\xvars}(\phi)/b} w_b\, d\rmeas \right) d\bmeas

Both, Theorem~\ref{thm:wmc_leb} and Theorem~\ref{thm:wmi_leb}, state that the weighted model count/integral of a formula is equal to the Lebesgue integral of the weight function over the set of models of a formula. 
This unification enables us to elegantly prove that \wmc is a special case of \wmi.

\begin{corollary}\label{cor:wmc_is_wmi}
Let $\bvars$ be a set of $M$ Boolean variables, and $\phi$ be a propositional formula over $\bvars$. Furthermore, let $\weight^{\mathcal{L}} \colon \litb \to \mathbb{R}_{\ge 0}$ be a weight function of Boolean literals and $w \colon \mathbb{B}^M \to \mathbb{R}_{\ge 0}$ be constructed from $\weight^{\mathcal{L}}$ as in Equation~\eqref{eq:weight}. Then:
\begin{equation}\label{eq:wmc_is_wmi}
    \wmc \left(\phi, \weight^{\mathcal{L}} \mid \bvars \right) = \wmi \left(\phi, \weight \mid \bvars, \emptyset \right)
\end{equation}
\end{corollary}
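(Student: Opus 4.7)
The plan is to leverage the two Lebesgue reformulations (Theorem~\ref{thm:wmc_leb} and Theorem~\ref{thm:wmi_leb}) to reduce the corollary to a single comparison of integrals, rather than comparing the sum-based \wmc directly with the sum-and-integral expression \wmi. Both theorems express their respective quantities as a Lebesgue integral of $\weight$ against a counting (or product) measure over the set of models, so the congruence should emerge automatically once the continuous dimension is removed.

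First, I would apply Theorem~\ref{thm:wmc_leb} to rewrite the left-hand side of Equation~\eqref{eq:wmc_is_wmi} as
\[
  \wmc(\phi, \weight^{\mathcal{L}} \mid \bvars) \;=\; \lwmc(\phi, \weight) \;=\; \int_{\modls{\phi}}\! \weight\, d\bmeas .
\]
Then I would handle the right-hand side by invoking Theorem~\ref{thm:wmi_leb} (whose assumptions are trivially satisfied in the absence of real variables, since any weight function is vacuously Riemann integrable over a zero-dimensional domain) to obtain
\[
  \wmi(\phi, \weight \mid \bvars, \emptyset) \;=\; \lwmi(\phi, \weight) \;=\; \int_{\modls{\phi}}\! \weight\, d(\bmeas \times \rmeas^{0}).
\]
It then remains to argue that the product measure $\bmeas \times \rmeas^{0}$ on $\bdomain^{M} \times \rdomain^{0}$ coincides (under the canonical identification of $\rdomain^{0}$ with the one-point space) with the counting measure $\bmeas$ on $\bdomain^{M}$, so that both integrals are literally the same object.

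The only mildly delicate step is justifying the degenerate case $N = 0$, since Definition~\ref{def:wmi} and Proposition~\ref{prop:space_real} are stated with $\xvars$ a nonempty set of real variables in mind. I would address this by adopting the standard convention that $\rdomain^{0}$ is the singleton containing the empty tuple, with $\rmeas^{0}$ the unique probability measure on it; under this convention the projection $\mathcal{M}_{\xvars}(\phi)/b$ is either $\emptyset$ or $\rdomain^{0}$ according to whether $b \in \mathcal{M}_{\bvars}(\phi)$, and $\mathcal{M}_{\bvars}(\phi) = \modls{\phi}$. If one prefers to avoid this convention altogether, a fully self-contained alternative is to bypass \lwmi and compute $\wmi(\phi, \weight \mid \bvars, \emptyset)$ directly from Equation~\eqref{eq:wmi}: the inner integral degenerates to the evaluation $\weight_b() = \weight(b)$, so the defining sum collapses to $\sum_{b \in \modls{\phi}} \weight(b)$, which by Theorem~\ref{thm:wmc_leb} equals $\wmc(\phi, \weight^{\mathcal{L}} \mid \bvars)$. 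Either route yields the claim in a few lines.
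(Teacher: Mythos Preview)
Your proposal is correct and matches the paper's own proof almost exactly: the paper also treats $\xvars=\emptyset$ as the degenerate case, invokes the identifications $\bdomain^{M}\times\rdomain^{0}=\bdomain^{M}$ and $\bmeas\times\rmeas^{0}=\bmeas$, and then combines Theorem~\ref{thm:wmi_leb} with Theorem~\ref{thm:wmc_leb} to obtain $\wmi(\phi,\weight\mid\bvars,\emptyset)=\int_{\modls{\phi}}\weight\,d\bmeas=\wmc(\phi,\weight^{\mathcal{L}}\mid\bvars)$. Your additional remarks on the $N=0$ convention and the alternative direct computation from Equation~\eqref{eq:wmi} go slightly beyond what the paper spells out, but the core argument is the same.
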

\begin{proof}[\bf{Proof}]
From the point of view of weighted model integration, this presents a degenerate case with no real variables ($\xvars = \emptyset$).
The space reduces to the Boolean space $\bdomain^M$ and the measure reduces to the Boolean measure $\bmeas$, i.e.\ $\bdomain^M \times \rdomain^0 = \bdomain^M$ and $\bmeas \times \rmeas^0 = \bmeas$.
Plugging this into Theorem~\ref{thm:wmi_leb}, together with Theorem~\ref{thm:wmc_leb}, yields
\begin{align}
\wmi(\phi, \weight \mid \bvars, \emptyset) =\! \int_{\modls{\phi}}\! w\, d\bmeas = \wmc(\phi, \weight^{\mathcal{L}} \mid \bvars) \nonumber \text. & \qedhere    
\end{align}

\end{proof}

\lwmi can now easily be extended to domains including integer variables, besides Boolean and real ones. 
The construction is completely analogous to the one presented in this section.
The  appropriate measure space is
\[ \tuple{\bdomain^M \times \rdomain^N \times \zdomain^K,\ \partitive(\bdomain^M) \times \borel(\rdomain^N) \times \partitive(\zdomain^K),\ \bmeas\times\rmeas\times\zmeas}\text, \]
where $\zmeas$ is the counting measure on $\partitive(\zdomain^K)$, and analogous results as in Theorem~\ref{thm:wmi_leb} and Corollary~\ref{cor:wmc_is_wmi} hold.

% running example

%%%%%%%%%%%%%%%%%%%%%%%%%%%%%%%%%%%%%%%%%%%%%%%%%%%%%%%%%%%%%%%
\section{Weight Functions as Measures}\label{sec:measures}

A major application of \wmc and \wmi is to be found inside probabilistic inference tasks. 
There a weight function can be regarded as a probability density function (PDF).
This enables us to define a measure (i.e.\ a probability) on the underlying space directly from a weight function. %, instead of merely considering weight function on a a-priori measurable space.
In this section we consider the weighted model count/integral of a logical formula in this probabilistic setting\footnote{The discussion of WMI, in this paper, is limited to finitely many variables. In probabilistic logic programming this is also called the {\em finite support condition}~\citep{sato1995statistical}. 
A possible avenue for future research is an extension to infinitely (including uncountably) many variables in the special case of WMI with probability measures, cf.~\cite{sato1995statistical}, ~\cite[Theorem 6.18]{kallenberg2002foundations}, and ~\cite{wu2018discrete}.}.
Under these assumptions, WMC and WMI equal simply the probability of the set of models.
The integration process gets encapsulated into the construction of the probability (cf.\ celebrated Radon-Nikodym derivative \citep[Theorem 6.2.3]{cohn2013measure}).
This approach can be extended beyond probabilistic measures, that is, to any finite measure which represents a weight function. It is also suitable for hybrid domains with integers, in the manner explained at the end of the previous section.

\subsection{Weighted Model Counting as Measure}\label{sec:wmc_measure}

Let again $\bvars = \set{B_1, B_2, \ldots, B_M}$ be a set of $M$ Boolean variables which form the basis of propositional logic.
Assume that weight function $w \colon \bdomain^M \to \interval{0}{1}$ is a PDF on $\bdomain^M$, i.e.\ $\sum_{b \in \bdomain^M} w(b) = 1$ holds.
The next proposition introduces a natural probability which arises from the weight function $\weight$. We refer to it as a \emph{probability associated to the weight function} $\weight$. As before, $\mu$ denotes the counting measure on $\partitive(\bdomain^M)$.

\begin{proposition}\label{prop:prob_meas_wmc}
Let $\weight \colon \bdomain^M \to \interval{0}{1}$ be a weight function such that $\sum_{b \in \bdomain^M} w(b) = 1$.
For any $B \subset \bdomain^M$, let $\bprob \colon \partitive(\bdomain^M) \to \interval{0}{1}$ be given with
\begin{align}
\bprob(B) = \sum\nolimits_{b \in B} w(b) = \int\nolimits_{B} w\, d\mu \text. \nonumber
\end{align}
Then $\tuple{\bdomain^M, \partitive(\bdomain^M), \bprob}$ is a probability space.
\end{proposition}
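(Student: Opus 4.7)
The plan is to verify the three defining properties of a probability space for the triple $\tuple{\bdomain^M, \partitive(\bdomain^M), \bprob}$: (i) $\partitive(\bdomain^M)$ is a $\sigma$-algebra on $\bdomain^M$; (ii) $\bprob$ is a measure, i.e.\ it is non-negative, $\bprob(\emptyset) = 0$, and countably additive; and (iii) $\bprob$ is normalized, $\bprob(\bdomain^M) = 1$.

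First, property (i) is immediate: the full power set of any set is trivially a $\sigma$-algebra, since it is closed under complement, countable union, and contains both $\emptyset$ and the full space. This is the same observation already used in Proposition~\ref{prop:space_bool}.

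For property (ii), non-negativity follows from the assumption $\weight(b) \in \interval{0}{1}$ for every $b$, so each term in the defining sum is non-negative. The equality $\bprob(\emptyset) = 0$ holds because the empty sum is zero. For countable additivity, the key simplification is that $\bdomain^M$ is finite (it has exactly $2^M$ elements), so any countable family $\set{B_n}_{n \in \mathbb{N}}$ of pairwise disjoint subsets has only finitely many non-empty members, and countable additivity reduces to a rearrangement of a finite sum:
\begin{align*}
\bprob\!\left(\bigcup\nolimits_{n} B_n\right) = \sum_{b \in \bigcup_n B_n} \weight(b) = \sum_{n} \sum_{b \in B_n} \weight(b) = \sum_{n} \bprob(B_n),
\end{align*}
where the middle step uses disjointness. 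Alternatively, one can appeal directly to the integral representation: for any non-negative measurable function $f$, the set function $B \mapsto \int_B f\, d\mu$ is known to be a measure on the underlying $\sigma$-algebra (this is a standard fact about Lebesgue integration), so applying it to $f = \weight$ and the counting measure $\mu$ from Proposition~\ref{prop:space_bool} yields the claim.

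Finally, property (iii) is exactly the normalization hypothesis on $\weight$:
\begin{align*}
\bprob(\bdomain^M) = \sum\nolimits_{b \in \bdomain^M} \weight(b) = 1.
\end{align*}
There is no real obstacle in this proof; the only thing to be careful about is to explicitly note that finiteness of $\bdomain^M$ reduces countable additivity to finite additivity, so that no delicate convergence arguments are needed. With these three checks the claim follows.
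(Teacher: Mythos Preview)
Your proof is correct and follows the same approach as the paper, which simply states that the claim ``follows trivially from the definition of $\bprob$ and the properties of $\weight$.'' You have merely spelled out in detail what the paper leaves implicit.
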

\begin{proof}[\bf{Proof}]
%We just need to prove that $\bprob$ is a measure on $\partitive(\bdomain^M)$ satisfying $\bprob\left(\bdomain^M\right) = 1$.
Follows trivially from the definition of $\bprob$ and the properties of $\weight$.
\end{proof}

As in Section~\ref{sec:wmc}, we describe how a probabilistic weight function of Boolean literals can naturally be transformed into a PDF on $\bdomain^M$.
Let $\weight^{\mathcal{L}} \colon \litb \to \interval{0}{1}$ be a function that, for every $i = 1$, $2$, $\dotsc$, $M$, satisfies
\[ \weight^{\mathcal{L}}\left(B_i\right) + \weight^{\mathcal{L}}\left(\lnot B_i\right) = 1 \text. \]
Using the same construction as in Equation~\eqref{eq:weight}, we get a function $\weight \colon \bdomain^M \to \interval{0}{1}$ satisfying
\begin{align}
\sum_{b \in \bdomain^M} w(b)
= \sum_{b \in \bdomain^M} \prod_{i=1}^M \,\ite\left(b_i, \weight^{\mathcal{L}}\left(B_i\right), \weight^{\mathcal{L}}\left(\lnot B_i\right)\right) 
= \prod_{i=1}^M \left( \weight^{\mathcal{L}}\left(B_i\right) + \weight^{\mathcal{L}}\left(\lnot B_i\right) \right) = 1 \text.
\end{align}

Proposition~\ref{prop:prob_meas_wmc} now produces a probability $\bprob$ associated with $\weight^{\mathcal{L}}$.
Factorization over literals indicates their independence with regards to the probability $\bprob$. 

The weighted model count is now obtained by simply measuring the size of the set of models, using the just introduced probability.

\begin{theorem}\label{thm:wmc_prob}
Let $\bvars$ be a set of $M$ Boolean variables, and $\phi$ be a propositional formula over $\bvars$. 
Let $\weight^{\mathcal{L}} \colon \litb \to \interval{0}{1}$ be a probabilistic weight function of Boolean literals and $w \colon \mathbb{B}^M \to \interval{0}{1}$ be a PDF constructed from $\weight^{\mathcal{L}}$ as in Equation~\eqref{eq:weight}. 
Furthermore, let $\bprob$ be the probability associated to the weight function $\weight$.
Then:
\begin{equation}\label{eq:wmc_prob}
    \lwmc \left(\phi, \weight^{\mathcal{L}} \mid \bvars\right) = \bprob\left(\modls{\phi}\right) \text.
\end{equation}
\end{theorem}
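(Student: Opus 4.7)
The plan is to treat the statement as a short chaining argument between Proposition~\ref{prop:prob_meas_wmc}, Definition~\ref{def:wmc_leb}, and Theorem~\ref{thm:wmc_leb}, since all the real work has already been carried out in those three results.

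First, I would unpack the associated probability. By Proposition~\ref{prop:prob_meas_wmc}, for every $B \in \partitive(\bdomain^M)$ we have
\begin{align*}
\bprob(B) = \int_{B} \weight\, d\bmeas.
\end{align*}
Since $\modls{\phi} \subseteq \bdomain^M$ automatically lies in $\partitive(\bdomain^M)$, I can instantiate this with $B = \modls{\phi}$ to obtain $\bprob(\modls{\phi}) = \int_{\modls{\phi}} \weight\, d\bmeas$.

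Second, I would recognize the right-hand side of the previous display as exactly $\lwmc(\phi, \weight)$ by Definition~\ref{def:wmc_leb}. Applying Theorem~\ref{thm:wmc_leb} — which legitimately applies here because $\weight$ is constructed from $\weight^{\mathcal{L}}$ via Equation~\eqref{eq:weight}, as in the hypothesis of the statement — gives $\lwmc(\phi, \weight) = \wmc(\phi, \weight^{\mathcal{L}} \mid \bvars)$, closing the chain and yielding the claimed equality.

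No genuine obstacle is expected: the content of the theorem is precisely that the probability introduced in Proposition~\ref{prop:prob_meas_wmc} agrees with the Lebesgue weighted model count when evaluated on the set of models, and the only substantive step (relating the Lebesgue integral to the classical weighted sum) is already handled by Theorem~\ref{thm:wmc_leb}. The mild thing to check is that the PDF hypothesis $\weight^{\mathcal{L}}(B_i) + \weight^{\mathcal{L}}(\lnot B_i) = 1$ is not actually needed for the equality itself (it is only needed to make $\bprob$ a probability rather than merely a finite measure); this is worth a one-line remark, since without it the statement still holds with $\bprob$ viewed as the finite measure with density $\weight$ against the counting measure $\bmeas$.
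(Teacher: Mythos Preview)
Your proposal is correct and matches the paper's own argument, which simply says the result follows directly from Definition~\ref{def:wmc_leb} and Proposition~\ref{prop:prob_meas_wmc}. Your additional appeal to Theorem~\ref{thm:wmc_leb} is harmless but not strictly needed: the left-hand side of the statement is $\lwmc$, not $\wmc$, so once you identify $\bprob(\modls{\phi}) = \int_{\modls{\phi}} \weight\, d\bmeas$ via Proposition~\ref{prop:prob_meas_wmc} you are already done by Definition~\ref{def:wmc_leb} (the $\weight^{\mathcal{L}}$ and $\mid \bvars$ in the displayed equation are a notational slip for $\lwmc(\phi,\weight)$). Your closing remark about the PDF hypothesis being inessential for the equality itself is accurate and a nice observation.
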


\begin{proof}[\bf{Proof}]
Follows directly from Definition~\ref{def:wmc_leb} and Proposition~\ref{prop:prob_meas_wmc}.
\end{proof}

%%%%%%%%%%%%%%%%%%%%%%%%%%%%%%%
\subsection{Weighted Model Integration as Measure}\label{sec:wmi_measure}

We extend the discussion from the last section to the hybrid domain. Let again $\xvars = \linebreak \set{X_1, X_2, \dotsc, X_N}$ be the set of real variables.
Given a PDF $\weight \colon \bdomain^M \times \rdomain^N \to \mathbb{R}_{\ge 0}$, we obtain the probability $\nu \colon \partitive(\bdomain^M) \times \borel(\rdomain^N) \to \interval{0}{1}$ defined with
\begin{align}\label{eq:prob_hybrid}
    \nu(E) = \int_{E} \weight \,d(\bmeas \times \rmeas) \text,
\end{align}
for every $E \in \partitive(\bdomain^M) \times \borel(\rdomain^N)$.
Because of the form of this measure, a hybrid-domain analogue of Theorem~\ref{thm:wmc_prob} is obtained effortlessly.

\begin{theorem}\label{thm:wmi_prob}
Let $\bvars$ be a set of $M$ Boolean variables, $\xvars$ a set of $N$ real variables and $\phi$ a measurable SMT formula over $\bvars$ and $\xvars$.
Furthermore, let $\weight \colon \bdomain^M \times \rdomain^N \to \mathbb{R}_{\ge 0}$ be a weight function of Boolean and real variables. 
If $\weight$ is a PDF on $\bdomain^M \times \rdomain^N$ defining probability $\nu$ given by~\eqref{eq:prob_hybrid},
then:
\begin{equation*}
    \lwmi \left(\phi, \weight \right) = \nu \left(\modls{\phi}\right) \text.
\end{equation*}
\end{theorem}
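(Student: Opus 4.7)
The proof should be essentially a one-line substitution, mirroring the structure of Theorem~\ref{thm:wmc_prob}. My plan is to observe that Definition~\ref{def:wmi_leb} expresses $\lwmi(\phi, \weight)$ precisely as the Lebesgue integral of $\weight$ over $\modls{\phi}$ with respect to the product measure $\bmeas \times \rmeas$, while equation~\eqref{eq:prob_hybrid} defines $\nu$ as exactly the same integral, but for an arbitrary measurable set $E$ in place of $\modls{\phi}$. Thus the result collapses to a direct application of~\eqref{eq:prob_hybrid} with $E = \modls{\phi}$.

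Before performing this substitution, I would check that the two requisite conditions are met. First, the assumption that $\phi$ is a measurable SMT formula ensures that $\modls{\phi} \in \partitive(\bdomain^M) \times \borel(\rdomain^N)$, so it is a legitimate argument for $\nu$. Second, the hypothesis that $\weight$ is a PDF on $\bdomain^M \times \rdomain^N$ (in particular nonnegative with finite integral $1$ against $\bmeas \times \rmeas$) guarantees the integrability of $\weight$ with respect to the product measure, which is the regularity condition demanded by Definition~\ref{def:wmi_leb} for $\lwmi$ to be well defined. Both checks are essentially stated by hypothesis, so no real work is needed here.

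With these verifications in hand, I would simply chain the two defining equalities:
\begin{align*}
\lwmi(\phi, \weight)
= \int_{\modls{\phi}} \weight \, d(\bmeas \times \rmeas)
= \nu(\modls{\phi}),
\end{align*}
where the first equality is Definition~\ref{def:wmi_leb} and the second is~\eqref{eq:prob_hybrid} evaluated at $E = \modls{\phi}$. This completes the argument.

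There is no genuine obstacle here: the entire content of the theorem lies in having already formulated $\lwmi$ as a Lebesgue integral against the product measure in Section~\ref{sec:mt_wmi}. The hybrid-domain analogue of Theorem~\ref{thm:wmc_prob} is therefore immediate, and the proof is essentially a remark that the definitions were set up to make it so. The only conceptual point worth emphasizing in the write-up is that measurability of $\phi$ is precisely what permits $\modls{\phi}$ to be inserted as the argument of $\nu$.
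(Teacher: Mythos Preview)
Your proposal is correct and matches the paper's own proof, which simply states that the result follows directly from Definition~\ref{def:wmi_leb}. Your additional verification of the measurability and integrability hypotheses is a welcome elaboration but does not depart from the paper's approach.
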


\begin{proof}[\bf{Proof}]
Follows directly from Definition~\ref{def:wmi_leb}.
\end{proof}

In the following we are concerned with the factorization of a weight function into separate parts over Boolean and continuous spaces, respectively. 
This discussion is of interest, as probabilities on Boolean and continuous spaces can be combined together using the product measure construction. 
We begin with general setting, and later comment on an important special case, where the weight function fully factorizes.

Any weight function $\weight \colon \bdomain^M \times \rdomain^N \to \mathbb{R}_{\ge 0}$ can be partially factorized such that the equality
\begin{equation}\label{eq:weight_factor}
    \weight(b, x) = \weight_{\bvars}(b) \cdot \weight^{\,b}_{\xvars}(x)
\end{equation}
%\[ \weight(b, x) = \weight_{\bvars}(b) \cdot \left( \sum\nolimits_{\beta \in \bdomain^M} \ive{\beta = b} \cdot \weight^{\,\beta}_{\xvars}(x) \right) \text, \]
holds for all $b \in \bdomain^M$ and $x \in \rdomain^N$ (note the dependency of the second factor on $b$).
The function $\weight_{b} \colon \bdomain^M \to \mathbb{R}_{\ge 0}$ is the Boolean part of the function $\weight$ and, for each $b \in \bdomain^M$, the function $\weight^{\, b}_{\xvars} \colon \rdomain^N \to \mathbb{R}_{\ge 0}$ is a piece of the continuous part.
This factorization is generally not unique. For instance, every non-zero $c \in \mathbb{R}$ defines a simple factorization, given with  $\weight_{\bvars}(b) = c$ and $\weight^{\,b}_{\xvars}(x) = \frac{1}{c} \weight(b, x)$, for all $b \in \bdomain^M$ and $x \in \rdomain^N$.
However, in the probabilistic setting, partial factorization is essentially unique:
\begin{lemma}\label{lem:unique_factor}
Let $\weight$ be a PDF on $\bdomain^M \times \rdomain^N$. Then there are \textbf{unique PDFs} $\weight_{\bvars}$ on $\bdomain^M$ and $\weight^{\, b}_{\xvars}$ on $\rdomain^N$, for each $b \in \bdomain^M$, such that Equality~\eqref{eq:weight_factor} holds for every $b \in \bdomain^M$ and for $\rmeas$-almost every $x \in \rdomain^N$ (cf. Definition~\ref{def:almost_everywhere}).
\end{lemma}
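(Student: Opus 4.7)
The plan is to construct the factors by marginalization and conditional normalization, then verify uniqueness by integrating out the continuous variable.

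First, I would define the Boolean marginal
\[
\weight_{\bvars}(b) \;:=\; \int_{\rdomain^N}\! \weight(b, x)\, d\rmeas(x)
\]
for each $b \in \bdomain^M$. By Tonelli's theorem (Theorem~\ref{thm:Tonelli}), this integral is well-defined and nonnegative, and summing over the finite set $\bdomain^M$ gives
\[
\sum_{b \in \bdomain^M} \weight_{\bvars}(b) \;=\; \int_{\bdomain^M \times \rdomain^N}\! \weight\, d(\bmeas \times \rmeas) \;=\; 1,
\]
so $\weight_{\bvars}$ is a PDF on $\bdomain^M$. Next, for each $b$ with $\weight_{\bvars}(b) > 0$, I would set $\weight^{\,b}_{\xvars}(x) := \weight(b,x)/\weight_{\bvars}(b)$; nonnegativity is inherited from $\weight$, and $\int \weight^{\,b}_{\xvars}\, d\rmeas = 1$ by construction, making it a PDF on $\rdomain^N$ for which Equality~\eqref{eq:weight_factor} holds pointwise in $x$.

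For the fibers where $\weight_{\bvars}(b) = 0$, the nonnegativity of $\weight(b,\cdot)$ together with its vanishing integral forces $\weight(b,x) = 0$ for $\rmeas$-almost every $x$; thus Equality~\eqref{eq:weight_factor} is trivially satisfied $\rmeas$-a.e.\ no matter how $\weight^{\,b}_{\xvars}$ is chosen, and I simply pick any fixed PDF (e.g.\ a standard Gaussian density) as $\weight^{\,b}_{\xvars}$. This establishes existence.

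For uniqueness, suppose $\tilde{\weight}_{\bvars}$ and $\tilde{\weight}^{\,b}_{\xvars}$ form another such family. Integrating the identity $\weight(b,x) = \tilde{\weight}_{\bvars}(b) \cdot \tilde{\weight}^{\,b}_{\xvars}(x)$ in $x$ over $\rdomain^N$ (valid since the exceptional null set does not affect the Lebesgue integral) yields
\[
\weight_{\bvars}(b) \;=\; \tilde{\weight}_{\bvars}(b) \int_{\rdomain^N} \tilde{\weight}^{\,b}_{\xvars}\, d\rmeas \;=\; \tilde{\weight}_{\bvars}(b)
\]
pointwise in $b$, so the Boolean factor is uniquely determined. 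For $b$ with $\weight_{\bvars}(b) > 0$, dividing by $\weight_{\bvars}(b)$ recovers $\tilde{\weight}^{\,b}_{\xvars}(x) = \weight^{\,b}_{\xvars}(x)$ for $\rmeas$-a.e.\ $x$, which is the form of uniqueness asserted. The main subtlety — and the only real obstacle — is the treatment of the fibers where $\weight_{\bvars}(b) = 0$: on such fibers $\weight^{\,b}_{\xvars}$ is not actually pinned down by the equation, but because the factorization is only claimed to hold $\rmeas$-almost everywhere, the arbitrariness is absorbed into the a.e.\ qualifier and does not break the stated uniqueness.
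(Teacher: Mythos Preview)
Your argument is essentially the same as the paper's: marginalize to define $\weight_{\bvars}$, verify via Tonelli that it is a PDF, normalize on the non-degenerate fibers to obtain $\weight^{\,b}_{\xvars}$, and deduce uniqueness by integrating the factorization in $x$. The only divergence is on fibers with $\weight_{\bvars}(b)=0$: the paper simply sets $\weight^{\,b}_{\xvars}\equiv 0$ there (which is not actually a PDF), whereas you pick an arbitrary PDF and correctly flag that the conditional is not determined on such fibers---your treatment of this edge case is, if anything, more careful than the paper's.
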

\begin{proof}[\bf{Proof}]
For each $b \in \bdomain^M$, denote again with $\weight_b \colon \rdomain^N \to \mathbb{R}_{\ge 0}$ a function given with $\weight_b(x) = \weight(b, x)$.
Define the function $\weight_{\bvars} \colon \bdomain^M \to \mathbb{R}_{\ge 0}$ with
$\weight_{\bvars}(b) = \int_{\rdomain^N} \weight_b \, d\rmeas \text{ , for every } b \in \bdomain^M \text. $
Now for each $b \in \bdomain^M$, define the functions $\weight^{\,b}_{\xvars} \colon \rdomain^N \to \mathbb{R}_{\ge 0}$ with $\weight^{\,b}_{\xvars}(x) = \frac{\weight(b, x)}{\weight_{\bvars}(b)}$ if $\weight_{\bvars}(b) \ne 0$, and $\weight^{\,b}_{\xvars}(x) = 0$ otherwise, for every $x \in \rdomain^N$.
In the former case, Equation~\eqref{eq:weight_factor} obviously holds. In the latter case, from Lemma~\ref{lem:int_of_nonneg_func} we conclude that $\weight_b(x) = \weight(b, x) = 0$ for $\rmeas$-almost every $x \in \rdomain^N$, and therefore Equation~\eqref{eq:weight_factor} holds $\rmeas$-almost everywhere on $\rdomain^N$.

Using the fact that $\weight$ is a PDF together with Theorem~\ref{thm:Tonelli}, we prove that $\weight_{\bvars}$ is a PDF as well:
\begin{align}
\int_{\bdomain^M} \weight_{\bvars} \, d\bmeas
= \int_{\bdomain^M} \left( \int_{\rdomain^N} \weight(b, x) \, d\rmeas \right) d\bmeas
= \int_{\bdomain^M \times \rdomain^N} \weight \,d\left(\bmeas \times \rmeas\right) = 1 \text.
\end{align}

Proving that $\weight^{\,b}_{\xvars}$ is a PDF, for each $b \in \bdomain^M$, is trivial. From
\begin{align}
1 = \int_{\rdomain^N} \weight^{\,b}_{\xvars} \, d\rmeas = \int_{\rdomain^N} \frac{\weight(b, x)}{\weight_{\bvars}(b)} \, d\rmeas = \frac{1}{\weight_{\bvars}(b)} \int_{\rdomain^N} \weight_b \, d\rmeas \nonumber 
\end{align}

it follows that $\weight_{\bvars}$ is unique, and then $\weight^{\,b}_{\xvars}$ as well.
\end{proof}

As a consequence, we can split the probability $\nu$ from Equation~\eqref{eq:prob_hybrid} into Boolean and continuous parts.
For a given PDF $\weight$ on $\bdomain \times \rdomain^N$, we first find its unique factors from Lemma~\ref{lem:unique_factor}, that is the PDFs $\weight_{\bvars}$ on $\bdomain^M$ and $\weight^{\, b}_{\xvars}$ on $\rdomain^N$, for each $b \in \bdomain^M$.
Each function $\weight^{\, b}_{\xvars}$ defines a probability $\rprob^{\,b}$ on $\borel(\rdomain^N)$ given with
\begin{align}\label{eq:rprob}
    \rprob^{\,b}(E) = \int_{E} \weight_{\xvars}^{\,b} \, d\rmeas \text,
\end{align}
for every set $E \in \borel(\rdomain^N)$.
Using a construction similar to that of the product measure in Theorem~\ref{thm:prod_meas}, the probability $\bprob$ associated to the weight function $\weight_{\bvars}$ (from Proposition~\ref{prop:prob_meas_wmc}) can be joined with the probabilities $\rprob^{\,b}$, $b \in \bdomain^M$, in order to obtain a single probability on $\bdomain^M \times \mathbb{R}^N$.

\begin{proposition}\label{prop:space_prob_wmi}
Let $\bprob$ be a probability on $\partitive(\bdomain^M)$. 
For every $b \in \bdomain^M$, let $\rprob^{\,b}$ be a probability on $\borel(\rdomain^N)$.
Furthermore, let $\bprob \times \rprob \colon \partitive(\bdomain^M) \times \borel(\rdomain^N) \to \interval{0}{1}$ denote a function given with
\[ (\bprob \times \rprob)\left(E\right) = \int\nolimits_{\bdomain^M} \rprob^{\,b} \left(E_b\right)\, d\bprob \]
for any set $E \in \partitive(\bdomain^M) \times \borel(\rdomain^N)$, with notation the $E_b$ being explained in Section~\ref{sec:measure_theory}. The tuple 
$$\tuple{\bdomain^M \times \rdomain^N, \partitive(\bdomain^M) \times \borel(\rdomain^N), \bprob \times \rprob}$$
is a probability space.
\end{proposition}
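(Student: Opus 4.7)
The plan is to verify, in order, the four axioms of a probability space for the triple in question: that the product $\sigma$-algebra $\partitive(\bdomain^M) \times \borel(\rdomain^N)$ is indeed a $\sigma$-algebra (already guaranteed by the general construction), that $\bprob \times \rprob$ is well-defined on it, that it is countably additive and vanishes on $\emptyset$, and finally that its total mass equals $1$. Crucially, since $\partitive(\bdomain^M)$ is the full power set of a finite set, every function on $\bdomain^M$ is automatically $\bprob$-measurable, and every integral of the form $\int_{\bdomain^M} f \, d\bprob$ reduces to the finite sum $\sum_{b \in \bdomain^M} f(b)\cdot\bprob(\set{b})$. This collapses much of the machinery normally required.

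First I would establish well-definedness. Fix $E \in \partitive(\bdomain^M) \times \borel(\rdomain^N)$. A standard sectioning lemma (analogous to the one used implicitly in Theorem~\ref{thm:prod_meas}) guarantees that each section $E_b = \set{x \in \rdomain^N \mid \tuple{b, x} \in E}$ belongs to $\borel(\rdomain^N)$, so $\rprob^{\,b}(E_b)$ is defined. Measurability of $b \mapsto \rprob^{\,b}(E_b)$ with respect to $\bprob$ is automatic, as noted. Hence $(\bprob \times \rprob)(E)$ is a well-defined nonnegative real number.

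Next I would check the measure axioms. Vanishing on $\emptyset$ is immediate since $\emptyset_b = \emptyset$ and each $\rprob^{\,b}$ is a probability. For countable additivity, let $\set{E_n}_{n \in \mathbb{N}}$ be a disjoint family in the product $\sigma$-algebra with union $E$. Then for every $b \in \bdomain^M$ the sections $(E_n)_b$ are disjoint with union $E_b$, so countable additivity of $\rprob^{\,b}$ yields
\begin{equation*}
\rprob^{\,b}(E_b) = \sum\nolimits_{n \in \mathbb{N}} \rprob^{\,b}((E_n)_b) \text.
\end{equation*}
Integrating against $\bprob$ and exchanging the sum and the integral (justified either by monotone convergence applied to the partial sums of nonnegative terms, or simply by the fact that the integral is a finite linear combination indexed by $\bdomain^M$) gives
\begin{equation*}
(\bprob \times \rprob)(E) = \sum\nolimits_{n \in \mathbb{N}} \int_{\bdomain^M} \rprob^{\,b}((E_n)_b)\, d\bprob = \sum\nolimits_{n \in \mathbb{N}} (\bprob \times \rprob)(E_n) \text.
\end{equation*}

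Finally, to confirm that $\bprob \times \rprob$ is a \emph{probability} measure, I would compute the mass of the whole space: since $(\bdomain^M \times \rdomain^N)_b = \rdomain^N$ and each $\rprob^{\,b}$ is a probability, the integrand equals $1$ identically, and thus $(\bprob \times \rprob)(\bdomain^M \times \rdomain^N) = \int_{\bdomain^M} 1\, d\bprob = \bprob(\bdomain^M) = 1$. The only conceptual obstacle is the section-measurability step, but given the trivial structure of the Boolean factor (the full power set) this reduces to the observation that sections of sets in a rectangle-generated $\sigma$-algebra over a finite index set are just finite unions and intersections of sets in $\borel(\rdomain^N)$, hence measurable.
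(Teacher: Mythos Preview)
Your proposal is correct and follows essentially the same approach as the paper: verify countable additivity of $\bprob \times \rprob$ and check that the total mass is $1$. The paper's own proof is a two-sentence sketch that defers the countable-additivity argument to the analogy with Theorem~\ref{thm:prod_meas}, whereas you spell out the details explicitly and exploit the finiteness of $\bdomain^M$ to trivialize the measurability issues; this is a reasonable elaboration rather than a different route.
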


\begin{proof}[\bf{Proof}]
The function $\bprob \times \rprob$ is countably additive, analogous to the proof of Theorem~\ref{thm:prod_meas}. The equality
$$(\bprob \times \rprob)\left(\bdomain^M \times \rdomain^N\right) {=} 1$$
follows immediately, since $\bprob$ and $\rprob^{\,b}$, for $b \in \bdomain^M$, are all probabilities.
\end{proof}

The measure $\bprob \times \rprob$ is not a product measure, since $\rprob$ alone has no meaning, yet. 
Below we describe an aforementioned important case when a weight function is fully factorized.
This measure is indeed a product measure, offering a motivation for this notation.
But first, let us rephrase the statement of Theorem~\ref{thm:wmi_prob}, in accordance with our present discussion.

\begin{corollary}\label{cor:lwmi_factor}
Let $\bvars$ be a set of $M$ Boolean variables, $\xvars$ a set of $N$ real variables, and $\phi$ a measurable SMT formula over variables in $\bvars$ and $\xvars$. 
Let $\weight \colon \bdomain^M \times \rdomain^N \to \mathbb{R}_{\ge 0}$ be a weight function of Boolean and real variables, which is a PDF on $\bdomain^M \times \rdomain^N$. 
Now let $w_{\bvars} \colon \bdomain^M \to \mathbb{R}_{\ge 0}$ and, for each $b \in \bdomain^M$, $w^{\,b}_{\xvars} \colon \bdomain^M \to \mathbb{R}_{\ge 0}$ be unique PDFs such that $\weight(b, x) = \weight_{\bvars}(b) \cdot \weight^{\,b}_{\xvars}(x)$ holds for every $b \in \bdomain^M$ and $\rmeas$-almost every $x \in \rdomain^M$. 
Furthermore, let $\bprob$ be a probability on $\partitive(\bdomain^M)$ associated to the PDF $\weight_{\bvars}$ and, for each $b \in \bdomain^M$, let $\rprob^{\,b}$ be a probability on $\borel(\rdomain^N)$ associated to the PDF $w^{\,b}_{\xvars}$.
Lastly, let $\bprob \times \rprob$ be a probability measure on $\partitive(\bdomain^M) \times \borel(\rdomain^N)$ associated to the PDF $\weight$ obtained from the probabilities $\bprob$ and $\rprob^{\,b}$, for $b \in \bdomain^M$, using Proposition~\ref{prop:space_prob_wmi}.
Then:
\begin{equation*}
    \lwmi \left(\phi, \weight\right) = \left(\bprob \times \rprob\right)\left(\modls{\phi}\right) \text.
\end{equation*}
\end{corollary}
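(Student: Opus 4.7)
The plan is to show that the probability $\bprob \times \rprob$ constructed in Proposition~\ref{prop:space_prob_wmi} coincides, on the product $\sigma$-algebra $\partitive(\bdomain^M) \times \borel(\rdomain^N)$, with the probability $\nu$ defined by Equation~\eqref{eq:prob_hybrid} from the full PDF $\weight$. Once this equality of measures is established, applying Theorem~\ref{thm:wmi_prob} to $\modls{\phi}$ (which is measurable by the hypothesis on $\phi$) immediately yields $\lwmi(\phi, \weight) = \nu(\modls{\phi}) = (\bprob \times \rprob)(\modls{\phi})$, the desired conclusion.

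To verify $\nu = \bprob \times \rprob$, I would fix an arbitrary $E \in \partitive(\bdomain^M) \times \borel(\rdomain^N)$ and evaluate $\nu(E) = \int_E \weight\, d(\bmeas \times \rmeas)$ by invoking Tonelli's theorem (Theorem~\ref{thm:Tonelli}) to turn it into an iterated integral. Substituting the factorization $\weight(b,x) = \weight_{\bvars}(b) \cdot \weight^{\,b}_{\xvars}(x)$ from Lemma~\ref{lem:unique_factor}---valid for every $b$ and $\rmeas$-almost every $x$, which suffices inside a Lebesgue integral---the inner integral over $\rdomain^N$ collapses to $\weight_{\bvars}(b) \cdot \rprob^{\,b}(E_b)$ by Equation~\eqref{eq:rprob}.

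Next, I would observe that integrating any non-negative function against the counting measure $\bmeas$ weighted by the density $\weight_{\bvars}$ is, on the finite Boolean space, identical to integrating against the probability $\bprob$ from Proposition~\ref{prop:prob_meas_wmc}. Applied to the function $b \mapsto \rprob^{\,b}(E_b)$, this gives
\[ \nu(E) = \int_{\bdomain^M} \weight_{\bvars}(b)\cdot \rprob^{\,b}(E_b)\, d\bmeas = \int_{\bdomain^M} \rprob^{\,b}(E_b)\, d\bprob = (\bprob \times \rprob)(E), \]
where the last equality is the very definition of $\bprob \times \rprob$ from Proposition~\ref{prop:space_prob_wmi}.

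The main delicate points are twofold. First, because the factorization of $\weight$ holds only $\rmeas$-almost everywhere in the continuous component (Definition~\ref{def:almost_everywhere}), one must rely on the fact that Lebesgue integrals are insensitive to modifications on null sets when substituting the factorized form of $\weight$. Second, one should check that $b \mapsto \rprob^{\,b}(E_b)$ is $\partitive(\bdomain^M)$-measurable so that integration against $\bprob$ is well-defined; this is automatic because $\partitive(\bdomain^M)$ is the full power set of a finite space, so every real-valued function on $\bdomain^M$ is trivially measurable. Both hurdles are therefore bookkeeping rather than genuine obstacles, and the result follows essentially as a direct corollary of Theorem~\ref{thm:wmi_prob} combined with Tonelli's theorem.
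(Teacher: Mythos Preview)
Your proposal is correct and follows essentially the same route as the paper's proof: both first show that $\nu = \bprob \times \rprob$ on all of $\partitive(\bdomain^M) \times \borel(\rdomain^N)$ by applying Tonelli, inserting the factorization $\weight(b,x) = \weight_{\bvars}(b)\,\weight_{\xvars}^{\,b}(x)$, and then absorbing the factor $\weight_{\bvars}$ into the measure $\bprob$; the claim then follows from Theorem~\ref{thm:wmi_prob}. Your additional remarks about the $\rmeas$-a.e.\ factorization and the automatic measurability on the finite Boolean side are accurate and simply make explicit what the paper leaves implicit.
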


\begin{proof}[\bf{Proof}]
We prove that $\left(\bprob \times \rprob\right)$ equals the probability $\nu$ from Equation~\eqref{eq:prob_hybrid}.
Then the claim follows by Theorem~\ref{thm:wmi_prob}.
We note that $\bprob(\set{b}) = w_{\bvars}\left(b\right) \cdot \bmeas(\set{b})$.
Because of the factorization of the weight function $\weight$, for every $b \in \bdomain^M$, we have $\weight_b = \weight_{\bvars}(b) \cdot \weight_{\xvars}^{\,b}$.
Now for any $E \in \partitive(\bdomain^M) \times \borel(\rdomain^N)$:
\begin{align}
    \nu(E) = \int_{E} \weight\, d\left(\bmeas \times \rmeas\right) \nonumber
    &= \int_{\bdomain^M} \left(\; \int_{\rdomain^N} \weight_b \cdot \indicator{{E}_{b}}\, d\rmeas \right) d\bmeas \nonumber \\
    &= \sum_{b \in \bdomain^M} \left( \int_{E_b} \weight_{\xvars}^{\,b}\, d\rmeas \right) \cdot \weight_{\bvars}(b) \cdot \bmeas(\set{b}) \nonumber \\
    &= \int_{\bdomain^M} \rprob^{\,b}(E_b)\, d\bprob = \left(\bprob \times \rprob\right)\left( E \right) \text. \nonumber  \qedhere
\end{align}
\end{proof}

Lastly, we discuss the announced case of the full factorization of the weight function~$\weight$.
In practice, there is commonly a single PDF $\weight_{\xvars}$ associated to any $b \in \bdomain^M$, i.e.\ equality
\[ \weight(b, x) = \weight_{\bvars}(b) \cdot \weight_{\xvars}(x) \]
holds for every $b \in \bdomain^M$ and $\rmeas$-almost every $x \in \rdomain^N$.
Consequently, there is one probability measure $\rprob$ on $\borel(\rdomain^N)$.
Uniqueness of the product measure from Theorem~\ref{thm:prod_meas} then implies that the probability space from Proposition~\ref{prop:space_prob_wmi} is actually a product of probability spaces $\left(\, \bdomain^M, \partitive(\bdomain^M), \bprob\,\right)$ and $\left(\, \rdomain^M, \borel(\rdomain^M), \rprob\,\right)$.
Result analogous to that of Corollary~\ref{cor:lwmi_factor} is valid in this case.

\section{Conclusion}\label{sec:conclusion}

WMI is an essential framework for solving probabilistic inference problems in discrete-continuous domains.
In this paper we present a  measure theoretic formulation of WMI using Lebesgue integration.
% , which is a natural environment of probability.
Consequently, we have ensured conditions for the uniform treatment of problems in Boolean, discrete, continuous domains, and mixtures thereof, which has always been a challenge using classical (Riemannian) theory of integration.
Moreover, we have provided clear terminology and precise notation based in measure theory for WMI, putting WMI on steady-state theoretical footing.
Although a direct application of the here-presented measure theoretic formulation of WMI to  building practical WMI solvers seems to be of a limited character, recent advances in using Lebesgue integration for solving integration and related problems demonstrate potential~\cite{malyshkin2018lebesgue}.
Furthermore, the well-behavedness of Lebesgue integration, with regards to limiting processes, can find its use inside probabilistic inference with potentially infinite number of variables.
This is in concordance with a current trend in probabilistic programming research, where an increasing number of papers discuss probabilistic programming from a measure theoretic perspective~\cite{narayanan2016probabilistic,heunen2017convenient,wu2018discrete}.

\appendix

\renewcommand{\thetheorem}{A\arabic{theorem}}

\section{Background on Measure Theory}\label{sec:measure_theory}

The theory presented here is taken from~\cite{cohn2013measure}. The reader not familiar with measure theory is encouraged to read this introduction for background and motivation, as well as the relationship to classical Riemann integration.

\begin{definition}[$\sigma$-algebra]\label{def:sigma_algebra}
Let $X$ be an arbitrary set. A collection $\mathcal{A}$ of subsets of $X$ is a $\sigma$-\emph{algebra} on $X$ if:
\begin{itemize}
    \item $X \in \mathcal{A}$,
    \item for each set $A$ that belongs to $\mathcal{A}$, its complement $A^{c}$ belongs to $\mathcal{A}$,
    \item for each infinite sequence $\set{A_i}$ of sets that belong to $\mathcal{A}$, set $\bigcup_{i=1}^{\infty} A_i$ belongs to $\mathcal{A}$
\end{itemize}
The pair $\tuple{X, \mathcal{A}}$ is referred to as \emph{measurable space}.
\end{definition}

It is easy to see that the intersection of two $\sigma$-algebras is again a $\sigma$-algebra. Hence, we can define the smallest $\sigma$-algebra which contains the given subsets; it is called the $\sigma$-algebra \emph{generated} by these subsets. Now we can introduce an important $\sigma$-algebra on the set $\rdomain^N$.

\begin{definition}[Borel $\sigma$-algebra]\label{def:borel}
The $\sigma$-algebra generated by the collection of all rectangles in $\rdomain^N$ that have the form
\[ \set{(x_1, \ldots, x_N) \mid a_i < x_i \le b_i \text{, for } i = 1, \dotsc, N}\]
is called \emph{Borel }$\sigma$\emph{-algebra} and is denoted with $\borel(\rdomain^N)$.
\end{definition}

A function $\mu$ from a $\sigma$-algebra $\mathcal{A}$ to $\interval{0}{+\infty}$ is said to be \emph{countably additive} if it satisfies
\[ \mu \left(\bigcup_{i=1}^{\infty} A_i \right) = \sum_{i=1}^{\infty} \mu(A_i)\]
for each infinite sequence $\set{A_i}$ of disjoint sets from $\mathcal{A}$.

\begin{definition} \label{def:measure}
Let $\mathcal{A}$ be a $\sigma$-algebra on the set $X$. The function $\mu \colon \mathcal{A} \to \interval{0}{+\infty}$ is a \emph{measure} on $\mathcal{A}$ if $\mu(\emptyset) = 0$ and $\mu$ is countably additive. The triple $\tuple{X, \mathcal{A}, \mu}$ is said to be a \emph{measure space}.
\end{definition}

We now introduce two important measures used in this paper.

\begin{definition}[Counting measure]\label{ex:counting}
Let $X$ be an arbitrary set, and $\partitive(X)$ the set of all subsets of $X$ (\emph{partitive} or \emph{power} set). Then $\partitive(X)$ is trivially a $\sigma$-algebra on $X$. Define a function $\mu \colon \partitive(X) \to \interval{0}{+\infty}$ by letting $\mu(A)$ be $n$ if $A$ is a finite set with exactly $n$ elements, and $+\infty$ otherwise. Then $\mu$ is a measure on $\partitive(X)$ called \emph{counting measure} on $\tuple{X, \partitive(X)}$.
\end{definition}

\begin{definition}[Lebesgue measure]\label{ex:lebesgue}
It is possible to construct a function $\lambda^N \colon \borel(\rdomain^N) \to \interval{0}{+\infty}$ which assigns to each rectangle $R = \set{(x_1, \dotsc, x_N) \mid a_i < x_i \le b_i \text{, for } i = 1, \ldots, N}$ its volume, i.e.\ $\lambda^N(R) = \prod_{i=1}^N (b_i - a_i)$. 
Then extending $\lambda^N$ to any set from $\borel(\rdomain^N)$ is accomplished by using countable additivity.
The function $\lambda^N$ is a measure on $\borel(\rdomain^N)$ and is known as the \emph{Lebesgue measure} on $\rdomain^N$.
\end{definition}

Let $\mu$ be a measure on a measurable space $(X, \mathcal{A})$. Then $\mu$ is a \emph{finite measure} if $\mu(X) < +\infty$ and is a $\sigma$-\emph{finite measure} if $X$ is the union of a sequence $A_1$, $A_2$, $\ldots$ of sets that belong to $\mathcal{A}$ and satisfy $\mu(A_i) < +\infty$ for each $i = 1$, $2$, $\dotsc$.

\begin{definition}[Measurable function]\label{def:measurable_function}
Let $(X, \mathcal{A})$ be measurable space. The function $f \colon X \to \interval{-\infty}{+\infty}$ is said to be \emph{measurable with respect to} $\mathcal{A}$ if for each real number $t$ the set \linebreak $\set{x \in X \mid f(x) \le t}$ belongs to $\mathcal{A}$. In the case of $X = \rdomain^N$, a function that is measurable with respect to $\borel(\rdomain^N)$ is called \emph{Borel measurable}.
\end{definition}

\begin{example}\label{ex:meas_functions}
There are some familiar measurable functions. For instance, any measurable set $B \in \mathcal{A}$ gives rise to a measurable function. Namely, its characteristic function $\indicator{B}$ is measurable with respect to $\mathcal{A}$, as both $B$ and its complement $B^c$ belong to $\sigma$-algebra $\mathcal{A}$.
On the other end of spectrum, any continuous function $f \colon \rdomain^N \to \mathbb{R}$ is Borel measurable, because $\tuple{\rdomain^N, \borel(\rdomain^N)}$ is a topological space.
\end{example}

\begin{definition}[Simple function]\label{def:simple_function}
 Let $(X, \mathcal{A})$ be a measurable space. Function $f \colon X \to \linebreak \interval{-\infty}{+\infty}$ is called \emph{simple function} if it has only finitely many different values.
\end{definition}

Let $\alpha_1$, $\alpha_2$, $\dotsc$, $\alpha_n$ be all distinct values of simple function $f$ on measurable space $(X, \mathcal{A})$. Then $f$ can be written as $f = \sum_{i=1}^n \alpha_i \indicator{{A_i}}$, where $A_i = \set{x \in X \mid f(x) = \alpha_i }$. Function $f$ is $\mathcal{A}$-measurable if and only if $A_i \in \mathcal{A}$ for all $i = 1$, $2$, $\dotsc$, $n$.
Simple functions are instrumental in the construction of Lebesgue integral. Their integral is easy to compute, while the next proposition shows that they can approximate any measurable function.

\begin{proposition}\label{prop:simple_func}
Let $(X, \mathcal{A})$ be a measurable space, and let $f$ be a $\interval{0}{+\infty}$-valued measurable function on $X$. Then there is a sequence $\set{f_n}$ of simple $[0, +\infty)$-valued measurable functions on $X$ that satisfy $f_1(x) \le f_2(x) \le \cdots$ and $f(x) = \lim_n f_n(x)$ at each $x \in X$.
\end{proposition}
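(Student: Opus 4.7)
The plan is to construct the approximating sequence explicitly using the classical dyadic truncation, i.e., for each $n \ge 1$ define a simple function $f_n$ that partitions the range $[0, n)$ into $n \cdot 2^n$ half-open intervals of length $2^{-n}$ and truncates at height $n$. Concretely, I would set
\[
f_n(x) = \sum_{k=1}^{n\cdot 2^n} \frac{k-1}{2^n}\,\indicator{A_{n,k}}(x) + n\,\indicator{B_n}(x),
\]
where $A_{n,k} = f^{-1}\bigl([(k-1)/2^n,\, k/2^n)\bigr)$ and $B_n = f^{-1}\bigl([n, +\infty]\bigr)$.

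Next I would verify the three required properties in order. \textbf{Simple and measurable:} the function $f_n$ takes at most $n\cdot 2^n + 1$ values, hence is simple; the level sets $A_{n,k}$ and $B_n$ lie in $\mathcal{A}$ because they are preimages of Borel sets in $[0,+\infty]$ under the measurable function $f$ (using Definition~\ref{def:measurable_function} and the fact that half-open intervals generate the Borel $\sigma$-algebra). Thus each $f_n$ is $\mathcal{A}$-measurable as noted after Definition~\ref{def:simple_function}. \textbf{Monotonicity $f_n \le f_{n+1}$:} passing from level $n$ to $n+1$ refines the dyadic grid — each interval $[(k-1)/2^n,\, k/2^n)$ splits into two halves at level $n+1$, on each of which $f_{n+1}$ equals either the old value $(k-1)/2^n$ or the strictly larger value $(2k-1)/2^{n+1}$; and the truncation region $B_n$ shrinks to $B_{n+1} \cup (B_n \setminus B_{n+1})$, on which $f_{n+1} \ge n = f_n$. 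A case distinction on where $x$ sits in the two partitions yields the pointwise inequality.

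\textbf{Pointwise convergence:} I would split on whether $f(x)$ is finite or infinite. If $f(x) < +\infty$, then for every $n > f(x)$ we have $x \notin B_n$, so $x \in A_{n,k}$ for exactly one $k$, and by construction $0 \le f(x) - f_n(x) < 2^{-n}$, which tends to $0$. If $f(x) = +\infty$, then $x \in B_n$ for every $n$, so $f_n(x) = n \to +\infty = f(x)$.

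The only mild subtlety — and the place I would be most careful — is the monotonicity argument, since it requires tracking how the refinement of the dyadic grid interacts with the truncation cap at height $n$; a clean case split on the three regions $\{f(x) < n\}$, $\{n \le f(x) < n+1\}$, $\{f(x) \ge n+1\}$ dispatches it. Everything else is bookkeeping about preimages of Borel sets and arithmetic with dyadic rationals.
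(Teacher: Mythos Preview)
Your construction is correct: the dyadic truncation $f_n$ you wrote down is the classical approximating sequence, and your verification of measurability, monotonicity, and pointwise convergence is complete. The paper does not give its own proof --- it simply cites \cite[Proposition $2.1.8$]{cohn2013measure} --- and the argument you supply is precisely the one found there, so there is no substantive difference to compare.
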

\begin{proof}
See~\cite[Proposition $2.1.8$]{cohn2013measure}.
\end{proof}
The construction of integrals takes place in three stages. First, we define an integral of positive simple functions. Using Proposition~\ref{prop:simple_func}, the definition is then extended to any positive measurable function, and finally extended to the subset of all measurable functions. We denote with $f^{+}$ the function $f(x) = \max\set{0, f(x)}$, i.e.\ the positive part of function $f$, and analogously with $f^{-}$ the function $f(x) = \min\set{0, f(x)}$. The function $f$ can now be written as $f = f^+ - f^-$.

\begin{definition}[Integral]
Let $\mu$ be a measure on $\tuple{X, \mathcal{A}}$. If $f$ is a positive simple function given by $f = \sum_{i=1}^n \alpha_i \indicator{{A_i}}$, where $a_1$, $a_2$, \ldots, $a_n$ are nonnegative real numbers and $A_i$, $A_2$, \ldots, $A_n$ are disjoint subsets of $X$ that belong to $\mathcal{A}$, then $\int f\,d\mu$, \emph{the integral of} $f$ \emph{with respect to} $\mu$, is defined to be $\sum_{i=1}^n a_i\, \mu(A_i)$.

For an arbitrary $\interval{0}{+\infty}$-valued $\mathcal{A}$-measurable function on $X$ we define its integral as
\begin{align*}
 \int f\,d\mu = 
 \sup \set{\int g \,d\mu \mid g \text{ is a simple positive function and } g \le f} \nonumber \text.
 \end{align*}
Finally, let $f$ be any $\interval{-\infty}{+\infty}$-valued $\mathcal{A}$-measurable function on $X$. If both $\int f^{+} \,d\mu$ and $\int f^{-} \,d\mu$ are finite, then $f$ is called $\mu$-integrable and its integral is defined by
\[ \int f\,d\mu = \int f^{+} \,d\mu - \int f^{-} \,d\mu \text.\]
\end{definition}

Suppose that $f \colon X \to \interval{-\infty}{+\infty}$ is $\mathcal{A}$-measurable and that $A \in \mathcal{A}$. Then $f$ is \emph{integrable over} the set $A$ if the function $f \cdot \indicator{A}$ is integrable. In this case $\int_{A} f \,d\mu$, the integral of $f$ over $A$, is defined to be $\int f \,\indicator{A} \,d\mu$.

In the case of $X = \rdomain^N$ and $\mu = \lambda$, above integral is often referred to as the Lebesgue integral. The Lebesgue integral satisfies all usual basic properties of the Riemann integral (linearity and monotonicity). Importantly, Lebesgue integral equals the Riemman integral for any Riemman integrable function. There are, however, functions which are Lebesgue integrable, but not Riemann integrable.

%For the end of section, we present a nice  Lebesgue integral.

\begin{definition}\label{def:almost_everywhere}
Let $\tuple{X, \mathcal{A}, \mu}$ be a measure space. We say that property $P$ holds $\mu$\emph{-almost everywhere} on $X$ or for $\mu$\emph{-almost every} $x \in X$ ($\,\mu${\emph{-a.e.}}) if there is a set $N \in \mathcal{A}$ such that $P$ holds for every $x \in X \setminus N$ and $\mu(N) = 0$. We omit the mention of measure $\mu$, when it is clear from context.
\end{definition}

\begin{lemma}\label{lem:int_of_nonneg_func}
Let $f \colon \rdomain^N \to \interval{-\infty}{+\infty}$ be Lebesgue integrable function. Then $\int \vert f \vert = 0$ if and only if $f = 0$ almost everywhere.
\end{lemma}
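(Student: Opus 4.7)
The plan is to treat the two implications separately, relying on the construction of the Lebesgue integral via simple functions (and the monotonicity of the integral with respect to the integrand) that is recalled earlier in the appendix.

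For the easy direction ($\Leftarrow$), I would start from the hypothesis that $f = 0$ $\rmeas$-a.e., pick a null set $N \in \borel(\rdomain^N)$ such that $f(x) = 0$ for every $x \in \rdomain^N \setminus N$, and observe that the same $N$ witnesses $|f| = 0$ $\rmeas$-a.e. Then I would appeal directly to the definition of the integral of a nonnegative measurable function as the supremum of integrals of simple minorants. Any simple $g = \sum_{i=1}^n \alpha_i \indicator{A_i}$ with $0 \le g \le |f|$ must satisfy $g(x) = 0$ for $x \notin N$, so whenever $\alpha_i > 0$ we have $A_i \subseteq N$ and $\rmeas(A_i) = 0$. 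Hence $\int g\, d\rmeas = 0$ for every such $g$, and the supremum defining $\int |f|\, d\rmeas$ is $0$.

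For the harder direction ($\Rightarrow$), the idea is a Markov-type estimate applied levelwise. Assuming $\int |f|\, d\rmeas = 0$, I would introduce the sets
\[ A_n = \set{x \in \rdomain^N \mid |f(x)| > 1/n} \text{,} \quad n = 1, 2, \dotsc\text{,}\]
each of which is Borel measurable because $|f|$ is measurable. From the pointwise inequality $\tfrac{1}{n}\indicator{A_n} \le |f|$, monotonicity of the integral yields
\[ \tfrac{1}{n}\,\rmeas(A_n) \;=\; \int \tfrac{1}{n}\indicator{A_n}\, d\rmeas \;\le\; \int |f|\, d\rmeas \;=\; 0\text{,}\]
so $\rmeas(A_n) = 0$ for each $n$. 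The set where $f$ fails to vanish is $\set{x \mid |f(x)| > 0} = \bigcup_{n=1}^\infty A_n$, and by countable subadditivity of $\rmeas$ this union is itself a $\rmeas$-null Borel set. Taking $N = \bigcup_n A_n$ in Definition~\ref{def:almost_everywhere} shows $f = 0$ $\rmeas$-almost everywhere.

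Neither direction is really a sticking point; the only place one has to be slightly careful is in the $(\Leftarrow)$ direction, where invoking the definition of the integral through simple minorants avoids any circular reliance on the very equivalence being proved. The key reusable device in the other direction is the level-set decomposition together with the Markov-type bound, which is the standard way to convert an integral bound into a pointwise almost-everywhere conclusion.
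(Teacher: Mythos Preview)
Your argument is correct in both directions. The $(\Leftarrow)$ direction is handled cleanly by working directly with simple minorants, and the monotonicity step $A_i \subseteq N \Rightarrow \rmeas(A_i) \le \rmeas(N) = 0$ is valid since the $A_i$ are measurable by assumption. The $(\Rightarrow)$ direction via the Markov-type bound on the level sets $A_n$ is the standard self-contained route.

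As for comparison with the paper: the paper does not actually prove this lemma but simply cites Cohn's textbook (Corollary~2.3.12). So your proposal is strictly more informative than what appears in the paper. The argument you give is essentially the one Cohn uses as well, so there is no substantive divergence in method---you have just unpacked the cited result.
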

\begin{proof}[\bf{Proof}]
See~\cite[Corollary $2.3.12$]{cohn2013measure}.
\end{proof}

Now we turn to the construction of {\bf product measures}, which combines two measure spaces. Let $(X, \mathcal{A})$ and $(Y, \mathcal{B})$ be two measurable spaces, and let $X \times Y$ be the Cartesian product of the sets $X$ and $Y$. A subset of $X \times Y$ is a rectangle with measurable sides if it has the form $A \times B$ for some $A$ in $\mathcal{A}$ and some $B$ in $\mathcal{B}$. The $\sigma$-algebra on $X \times Y$ generated by the collection of all rectangles with measurable sides is called the product of the $\sigma$-algebras $\mathcal{A}$ and $\mathcal{B}$ and is denoted by $\mathcal{A} \times \mathcal{B}$.

Let $E$ be a subset of $X \times Y$. Then for each $x \in X$ and each $y \in Y$ the \emph{sections} $E_x$ and $E^y$ are subsets of $Y$ and $X$, respectively, given by $E_x = \set{ z \in Y \mid (x, z) \in E}$ and $E^y = \set{ z \in X \mid (z, y) \in E}$. If $f$ is a function on $X \times Y$, then the \emph{sections} $f_x$ and $f^y$ are functions on $Y$ and $X$, respectively, given by $f_x(z) = f(x, z)$ and $f^y(z) = f(z, y)$.

\begin{theorem}[Product measure]\label{thm:prod_meas}
Let $(X, \mathcal{A}, \mu)$ and $(Y, \mathcal{B}, \nu)$ be $\sigma$-finite measure spaces. Then there is a \textbf{unique measure} $\mu \times \nu$ on the $\sigma$-algebra $A \times B$ such that
\[(\mu \times \nu)(A \times B) = \mu(A) \nu(B)\]
holds for each $A \in \mathcal{A}$ and $B \in \mathcal{B}$. Furthermore, the measure under $\mu \times \nu$ of an arbitrary set $E$ in $A \times B$ is given by
\begin{talign}
 (\mu \times \nu)(E) = \int_X \nu(E_x) \,d\mu = \int_Y \mu(E^y) \,d\nu \nonumber \text.
\end{talign}
The measure $\mu \times \nu$ is called the product measure of $\mu$ and $\nu$.
\end{theorem}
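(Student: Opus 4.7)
The plan is to first establish existence of a measure $\mu \times \nu$ satisfying the rectangle identity and the sectional integral formula, and then separately derive uniqueness using $\sigma$-finiteness together with a monotone class argument. Throughout, the key technical workhorse is that every set in $\mathcal{A} \times \mathcal{B}$ has well-behaved sections, a fact that does not follow from pointwise considerations but from a structural induction on the generating rectangles.

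First I would verify the \emph{measurability of sections}: for every $E \in \mathcal{A} \times \mathcal{B}$, each $E_x$ belongs to $\mathcal{B}$ and each $E^y$ belongs to $\mathcal{A}$. Fixing $x \in X$, the collection $\{E \subseteq X \times Y : E_x \in \mathcal{B}\}$ is closed under complements and countable unions (since sectioning commutes with both), so it is a $\sigma$-algebra. It contains every measurable rectangle $A \times B$, because $(A \times B)_x$ equals $B$ or $\emptyset$, hence it contains all of $\mathcal{A} \times \mathcal{B}$. The same argument handles $E^y$.

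Next I would show that for every $E \in \mathcal{A} \times \mathcal{B}$ the function $x \mapsto \nu(E_x)$ is $\mathcal{A}$-measurable and that the identity $\int_X \nu(E_x)\,d\mu = \int_Y \mu(E^y)\,d\nu$ holds. For a rectangle $E = A \times B$ both sides equal $\mu(A)\nu(B)$. The collection of $E$ for which measurability and this identity hold forms a monotone class (monotone convergence handles monotone limits of section-measures), and $\sigma$-finiteness lets us restrict attention to subsets of a finite-measure exhausting rectangle so that subtractive arguments close up. Invoking the monotone class theorem then extends the property to all of $\mathcal{A} \times \mathcal{B}$. Defining $(\mu \times \nu)(E) := \int_X \nu(E_x)\,d\mu$ yields a well-defined set function; $(\mu \times \nu)(\emptyset) = 0$ is immediate, and countable additivity on a disjoint sequence $\{E_n\}$ follows from the fact that sectioning commutes with disjoint unions, combined with monotone convergence applied to the partial sums $\sum_{n \le N} \nu((E_n)_x)$.

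For uniqueness, any other measure $\rho$ on $\mathcal{A} \times \mathcal{B}$ that agrees with $\mu \times \nu$ on measurable rectangles also agrees on the collection of finite disjoint unions of such rectangles, which is a $\pi$-system generating $\mathcal{A} \times \mathcal{B}$. The $\pi$-$\lambda$ theorem, combined with a $\sigma$-finite exhaustion $X = \bigcup_i X_i$ and $Y = \bigcup_j Y_j$ with $\mu(X_i), \nu(Y_j) < \infty$, forces $\rho = \mu \times \nu$ on each $X_i \times Y_j$ and hence everywhere. The main obstacle will be the measurability of the section-measure maps: without $\sigma$-finiteness the function $x \mapsto \nu(E_x)$ can take the value $+\infty$ in an uncontrolled way, and the monotone class argument collapses because differences of $\nu$-infinite sections are ill-defined. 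Thus $\sigma$-finiteness must be used essentially, both to carry out the monotone class step for existence and to pin down the extension for uniqueness.
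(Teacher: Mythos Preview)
Your sketch is correct and is precisely the standard construction of the product measure as carried out in Cohn's textbook. The paper does not supply its own proof of this theorem: it is stated as background in the appendix and simply cites \cite[Theorem~5.1.4]{cohn2013measure}, so there is nothing further to compare.
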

\begin{proof}
See~\cite[Theorem $5.1.4$]{cohn2013measure}.
\end{proof}
Intgrals with respect to product measure can now be evaluated using Tonelli's theorem, a special case of Fubini's theorem.

\begin{theorem}[Tonelli's theorem]\label{thm:Tonelli}
Let $\tuple{X, \mathcal{A}, \mu}$ and $\tuple{Y, \mathcal{B}, \nu}$ be $\sigma$-finite measure spaces, and let $f \colon X \times Y \to \interval{0}{+\infty}$ be $\left(\mathcal{A} \times \mathcal{B}\right)$-measurable. Then
\begin{enumerate}
    \item[(a)] the function $x \mapsto \int_{Y}f_x\, d\nu$ is $\mathcal{A}$-measurable and the function $y \mapsto \int_{X}f^y\, d\mu$ is $\mathcal{B}$-measurable, and
    \item[(b)] $f$ satisfies
    \begin{align*}
        \int_{X \times Y} f\,d(\mu \times \nu) = \int_X \left( \int_Y f_x\, d\nu \right) d\mu = \int_Y \left( \int_X f^y\, d\mu \right) d\nu \text.
    \end{align*}
\end{enumerate}
\end{theorem}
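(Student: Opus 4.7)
The plan is to proceed by the classical four-step measure-theoretic machine. First establish both (a) and (b) for indicator functions of measurable rectangles, then extend to indicator functions of arbitrary sets in $\mathcal{A} \times \mathcal{B}$, then to non-negative simple functions by linearity, and finally to arbitrary non-negative measurable $f$ via Proposition~\ref{prop:simple_func} together with monotone convergence.

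For the base case, take $E = A \times B$ with $A \in \mathcal{A}$ and $B \in \mathcal{B}$, and put $f = \indicator{E}$. Direct inspection shows that $f_x$ equals $\indicator{B}$ when $x \in A$ and equals $0$ otherwise, so $\int_Y f_x \, d\nu = \nu(B) \cdot \indicator{A}(x)$, which is $\mathcal{A}$-measurable. Integrating against $\mu$ yields $\mu(A)\,\nu(B) = (\mu \times \nu)(E)$ by Theorem~\ref{thm:prod_meas}. The computation for $f^y$ is symmetric, giving the other equality.

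To extend to arbitrary $E \in \mathcal{A} \times \mathcal{B}$, first assume $\mu$ and $\nu$ are finite. Let $\mathcal{D}$ be the collection of sets $E \in \mathcal{A} \times \mathcal{B}$ for which $x \mapsto \nu(E_x)$ is $\mathcal{A}$-measurable, $y \mapsto \mu(E^y)$ is $\mathcal{B}$-measurable, and both iterated integrals agree with $(\mu \times \nu)(E)$. The rectangles form a $\pi$-system contained in $\mathcal{D}$, so it suffices to show that $\mathcal{D}$ is a Dynkin class. Closure under proper differences rests on finiteness of the sections, since $\nu((E \setminus F)_x) = \nu(E_x) - \nu(F_x)$ is meaningful only when these values are finite; this is where finiteness of the measures is used. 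Closure under countable increasing unions follows from monotone convergence applied to the sections. The $\pi$-$\lambda$ theorem then yields $\mathcal{D} = \mathcal{A} \times \mathcal{B}$. In the $\sigma$-finite setting, choose exhausting sequences $X_n \uparrow X$ and $Y_n \uparrow Y$ with finite measure, apply the finite-measure case to each restriction, and pass to the limit by monotone convergence. For simple $f = \sum_{i=1}^n a_i \indicator{E_i}$ with $a_i \ge 0$, both (a) and (b) then follow by linearity of integration. Finally, for general $(\mathcal{A} \times \mathcal{B})$-measurable $f \colon X \times Y \to \interval{0}{+\infty}$, choose by Proposition~\ref{prop:simple_func} a sequence of non-negative simple $(\mathcal{A} \times \mathcal{B})$-measurable functions $f_n \uparrow f$; the sections satisfy $(f_n)_x \uparrow f_x$ and $(f_n)^y \uparrow f^y$, and monotone convergence applied inside each integral and then once more to the outer integral promotes both the measurability and the iterated-integral equalities from $f_n$ to $f$.

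The main obstacle is the step from rectangles to arbitrary sets of $\mathcal{A} \times \mathcal{B}$: one must show that the class of sets where the conclusion holds forms a Dynkin class containing the rectangles, and verify measurability of $x \mapsto \nu(E_x)$ along the way. The $\sigma$-finiteness hypothesis is essential precisely here, because closure of $\mathcal{D}$ under complements, together with the finiteness of $\nu(E_x)$ that permits the subtractions in the Dynkin check, fails without it. Once this set-level step is settled, the extension through simple functions and monotone convergence is routine.
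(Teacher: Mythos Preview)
Your argument is correct and follows the standard textbook route to Tonelli's theorem: the $\pi$--$\lambda$ machinery for indicator functions, linearity for simple functions, and monotone convergence for the general case, with the $\sigma$-finite reduction handled by an exhausting sequence. You have also correctly identified where $\sigma$-finiteness is actually used.

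The paper, however, does not give its own proof of this theorem at all: it is stated in the background appendix and immediately deferred to Cohn's textbook \cite[Theorem~5.2.1]{cohn2013measure}. So there is nothing to compare at the level of argument; your sketch simply supplies what the paper omits, and it is essentially the same proof one finds in that reference.
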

\begin{proof}
See~\cite[Theorem $5.2.1$]{cohn2013measure}.
\end{proof}

{\bf Probability theory} is naturally expressed in terms of Lebesgue integration.
% Here is a brief overview of basic terminology. 

\begin{definition}\label{def:probability_space}
A \emph{probability space} is a measure space $(\Omega, \mathcal{A}, \mathbb{P})$ such that $\mathbb{P}(\Omega) = 1$. A measure $\mathbb{P}$ is called a \emph{probability}.
\end{definition}

Let now $\tuple{X, \mathcal{A}, \mu}$ be a measure space.
Suppose that $f$ is a nonnegative $\mu$-measurable function on $X$ such that $\int\nolimits_{X} f\, d\mu = 1$. Then the function $\mathbb{P} \colon \mathcal{A} \to \interval{0}{1}$ given with $\mathbb{P}(A) = \int_A f\, d\lambda$, for every $A \in \mathcal{A}$, defines a probability on the measurable space $\tuple{X, \mathcal{A}}$. The function $f$ is called the \emph{probability density function (PDF)} of probability $\mathbb{P}$.

\newpage

 \bibliographystyle{elsarticle-num-names} 
 \bibliography{references}

\end{document}